\newcommand{\eps}{\varepsilon}
\newcommand{\nhd}[1]{\partial #1}
\newcommand{\pr}{\mathbf{Pr}}
\newcommand{\st}{\mbox{\rm s.t. }}
\newcommand{\per}{\mathrm{Per}}
\newcommand{\kl}{\mathrm{KL}}
\newcommand{\cB}{\mathcal{B}}
\newcommand{\emax}{E_{\max}}
\newcommand\N{\mathbb N}
\newcommand\R{\mathbb R}
\newcommand\C{\mathbb C}
\newcommand{\inparen}[1]{\left(#1\right)}             %\inparen{x+y}  is (x+y)
\newcommand{\insquare}[1]{\left[#1\right]}             %\insquare{x+y}  is [x+y]
\newcommand{\inangle}[1]{\left\langle#1\right\rangle} %\inangle{A}    is <A>
\newcommand{\wt}[1]{\widetilde{#1}}
\newtheorem{theorem}{Theorem}[section]
\newtheorem{definition}{Definition}[section]
\newtheorem{lemma}[theorem]{Lemma}
\newtheorem{remark}[theorem]{Remark}
\newenvironment{proof}{\begin{trivlist} \item {\bf Proof:~~}}
   {\qed\end{trivlist}}
\def\FullBox{\hbox{\vrule width 6pt height 6pt depth 0pt}}
\def\qed{\ifmmode\qquad\FullBox\else{\unskip\nobreak\hfil
\penalty50\hskip1em\null\nobreak\hfil\FullBox
\parfillskip=0pt\finalhyphendemerits=0\endgraf}\fi}
\newenvironment{proofof}[1]{\begin{trivlist} \item {\bf Proof
#1:~~}}
  {\qed\end{trivlist}}
\begin{document}

\title{ Belief Propagation, Bethe Approximation and Polynomials}

\date{}

\author[1]{ Damian Straszak}
\author[2]{Nisheeth K. Vishnoi}
\affil[1,2]{\small \'{E}cole Polytechnique F\'{e}d\'{e}rale de Lausanne (EPFL), Switzerland}

\maketitle

\begin{abstract}
Factor graphs are important models for succinctly representing probability distributions in machine learning, coding theory, and statistical physics. 
Several computational problems, such as computing marginals and partition functions, arise naturally when working with factor graphs.
Belief propagation is a widely deployed iterative method for solving these problems.
However, despite its significant empirical success, not much is known about the correctness and efficiency of  belief propagation.

Bethe approximation is an optimization-based framework for approximating partition functions.
While it is known that the stationary points of the Bethe approximation coincide with the fixed points of belief propagation, in general, the relation between the Bethe approximation and the partition function is not well understood.
It has been observed that for a few classes of factor graphs, the Bethe approximation always gives a lower bound to the partition function, which distinguishes them from the general case, where neither a lower bound, nor an upper bound holds universally.
This has been rigorously proved for permanents \cite{Vontobel13,GS14} and for attractive graphical models \cite{Ruozzi12}.

Here we consider bipartite normal factor graphs and show that if the local constraints satisfy a certain analytic property,  the Bethe approximation is a lower bound to the partition function. 
We arrive at this result by viewing factor graphs through the lens of polynomials.
In this process, we reformulate the Bethe approximation as a polynomial optimization problem.
Our sufficient condition for the lower bound property to hold is inspired by recent developments in the theory of real stable polynomials.
We believe that this  way of viewing factor graphs and its connection to real stability  might lead to a better understanding of belief propagation and factor graphs in general.
\end{abstract}

\newpage

\section{Introduction}
Several important classes of probability distributions studied in statistical physics, coding theory, and machine learning can be succinctly represented as factor graphs \cite{MM09,WJ08}.
Informally, they provide a way to describe complex, multivariate functions by specifying variables and relations between them in a form of a hypergraph \cite{KFL01}. 
In this context, of interest are the inference problem of estimating marginal probabilities of certain variables and the problem of estimating the partition function of such a factor graph.
In computer vision one applies such inference primitives to learn about objects in a stage being captured by several cameras~\cite{FPC00}.
They are also essential components for decoding algorithms for Low-Density Parity Check codes \cite{Gallager62,Tanner81}. 
In statistical physics, these problems are equivalent to learning properties of typical configurations of a given mechanical system~\cite{MM09}.
Due to  the practical relevance and broad applicability of such inference primitives, over several decades numerous approximate and heuristic methods have been developed to compute these quantities. 
Among them, the most widely deployed is the belief propagation method \cite{Gallager62,Pearl88}, which is an iterative {\it message passing algorithm} (or equivalently a discrete-time dynamical system) for computing marginals and partition functions. 
It is known that belief propagation provides exact answers when the considered factor graph is a tree~\cite{Pearl88} and gives decent approximations on locally tree-like graphs \cite{DM10}.
However, a general theory explaining the great empirical success of the belief propagation method is lacking.

\smallskip
Another, seemingly unrelated approach, with its roots in physics, is the Bethe approximation \cite{Bethe35,KKW53,Georgii11}.
It is based on computing the optimal value (called the Bethe partition function) to a certain continuous optimization problem and using it as an estimate of the true partition function.
There is a fundamental connection known between belief propagation and Bethe approximation -- the fixed points of the former arrive exactly as the stationary points of the optimization problem underlying the latter~\cite{YFW05}.
This provides a good grasp on the belief propagation algorithm, that is otherwise hard to reason about.
By establishing bounds on the Bethe partition function, one can deduce facts about the behavior of the belief propagation algorithm and, importantly, learn to some extent, where will it converge to.

\smallskip
Even though for real-world examples of factor graphs the Bethe partition function seems to provide a decent estimate to the partition function, there are known examples for which the approximation is arbitrarily bad \cite{WJ08,WTJS14}.
This is not a surprise, as the inference problems related to factor graphs can encode {\bf NP}-hard problems and even {\bf \#P}-hard problems (such as counting independent sets in a graph) can be seen as computing certain partition functions.
Another difficulty, which rules out several proof techniques for dealing with such relaxations is the fact that the underlying optimization problem is not convex.
For this reason, it is hard to expect a characterization of factor graphs for which the Bethe approximation can be related to the true partition function.
Instead, there are efforts to describe viable sufficient conditions under which some relation can be established.
For factor graphs representing permanents, it has been proved that the Bethe approximation is a lower bound to the true partition function~\cite{Vontobel13,Gurvits11,GS14}.
A similar phenomenon has been observed and conjectured to hold for {\it log-supermodular} factor graphs~\cite{SudderthWW07} and a positive resolution was proposed by~\cite{Ruozzi12}.

\medskip
We propose a new, alternative view on factor graphs via the lens of polynomials.
Specifically, we introduce a natural way of representing local functions as polynomials, so that the Bethe approximation can be restated as a polynomial optimization problem.
This allows us to relate properties of the underlying polynomials to the behavior of the Bethe approximation.
We state a natural analytic condition under which the Bethe partition function lower-bounds the true partition function.
The condition is inspired by recent developments in the theory of {\it real stable polynomials}~\cite{BBL09,BB09,BB09b} and in particular by recent polynomial approaches to partition functions~\cite{AO17,SV17} (see Remark~\ref{rem:ao_sv} for a comparison) based on ideas from~\cite{Gurvits06}.
In its simplest form, it requires all the polynomials underlying the factor graph to be real stable.
Interestingly, such factor graphs are necessarily {\em repulsive} or {\it log-submodular}, which complements the lower bounds obtained by~\cite{Ruozzi12} -- for {\em attractive} or {\it log-supermodular} models.
We believe that this framework based on polynomials might be used to establish similar bounds for different classes of factor graphs and more generally to answer different questions about the Bethe approximation and the belief propagation algorithm.

\section{Factor Graphs and Bethe Approximation}\label{sec:defs}
\subsection{Factor Graphs}
We work with probability distributions represented by Normal Factor Graphs (NFGs).
In an NFG $G=(F,E,\{g_a\}_{a\in F})$, there is a set of factors (or nodes) $F$ and a set of variables (or edges) $E$. 
Every edge $e\in E$ connects exactly two factors. 
The set of edges incident to a factor $a\in F$ is denoted by $\nhd{a} \subseteq E$. 
The last component of $G$ is a collection of {\it local functions} $\{g_a\}_{a\in F}$. 
Every such function $g_a$ takes as input a binary string of length $|\nhd{a}|$ and outputs a non-negative number, in other words $g_a: \{0,1\}^{\nhd{a}} \to \R_{\geq 0}$. 
For a given vector $\sigma \in \{0,1\}^E$ and any set of edges $S\subseteq E$ we denote by $\sigma_{S}$ the sub-vector of $\sigma$ of length $|S|$ indexed by edges in $S$. 
Edges are to be thought of as variables that can take one of two possible values: $0$ or $1$. 
Then the set of all possible configurations of $G$ is $\{0,1\}^E$. 
Consider the probability distribution $p$ on $\{0,1\}^E$ by setting
\begin{equation}
\begin{aligned}
	p_{\sigma}&:= \frac{\prod_{a\in F} g_a({\sigma}_{\nhd{a}})}{Z(G)}~~~~~~~\mbox{for }{\sigma}\in \{0,1\}^E,\\  Z(G)&:= \sum_{{\sigma}\in \{0,1\}^E}\prod_{a\in F} g_a({\sigma}_{\nhd{a}}).
\end{aligned}
\label{eq:p}
\end{equation}
It is always assumed that $Z(G) \neq 0$, in which case $p$ is a well defined probability distribution over configurations. 
The focus here is on the problem of estimating $Z(G)$ for a given normal factor graph $G$. 

Note that in a related model of {\it factor graphs}, variables are represented by variable nodes, whereas in the model  considered here they are represented by edges. 
However, a simple reduction shows that these two models are equivalent~\cite{FV11}. 
We choose to work with normal factor graphs to allow a cleaner statement of results.

\subsection{Bethe Approximation}

The Bethe approximation is a popular heuristic called  for computing $Z(G)$. 
It is based on computing a quantity $Z_B(G)$ -- called the Bethe partition function of $G$ -- as a solution to a continuous optimization problem defined with respect to $G$. 
To derive the Bethe approximation, one begins with the following convex program
\begin{equation}
\begin{aligned}
	\sup_{q}~~ &\sum_{\sigma} q_\sigma \log \frac{g(\sigma)}{q_\sigma}\\
	\st ~~ &  \sum_{\sigma \in \{0,1\}^E} q_\sigma=1,&\\
	& q\geq 0. &
\end{aligned}
\label{eq:entropy_prog}	
\end{equation}
where $g(\sigma)= \prod_{a\in F} g_a(\sigma_{\nhd a})$. 
It is not hard to prove that the above program has an optimal solution $q^\star = p$ (with $p$ as in~\eqref{eq:p}), and the optimal value is $\log Z(G)$. 
Thus the problem of computing the partition function is reduced to solving the program~\eqref{eq:entropy_prog}. 
This reduction, however, does not seem to make the problem any easier, as the number of variables in~\eqref{eq:entropy_prog} is exponential. 
Thus, various heuristics have been proposed on how to reduce the number of variables in~\eqref{eq:entropy_prog} so as to make this approach of estimating $\log Z(G)$  feasible. 
The Bethe approximation has variables $\beta_e \in [0,1]$ for $e\in E$, which are the marginals of the distribution $\{q_\sigma\}_{\sigma \in \{0,1\}^E}$, more formally we think of $\beta_e$ as $\pr[X_e=1]$ where $X \in \{0,1\}^E$ is distributed according to $q$. 
Similarly one introduces variables representing marginals over factors, i.e. for $a\in F$ we have a vector $\alpha_a$ which is a probability distribution over local configurations $\{0,1\}^{\nhd a}$, and its interpretation is that $\alpha_a(c) = \pr[X_a=c]$. 
To simplify the program~\eqref{eq:entropy_prog} the following assumption is made about the form of the distribution $\{q_\sigma\}_{\sigma \in \{0,1\}^E}$ 
\begin{equation}\label{eq:approx_q}
\forall_{\sigma \in \{0,1\}^E} ~~~~~~q_\sigma = \frac{\prod_{a\in F} \alpha_a(\sigma_{\nhd a})}{\prod_{e\in E} \beta_e^{\sigma_e}(1-\beta_e)^{1-\sigma_e}}.
\end{equation}
The intuition behind such a form of $q_\sigma$ is that one might (for simplicity) assume independence between factors and calculate the probability of a global configuration as a product of probabilities over local configurations of factors. 
The term in the denominator can be thought of as a correction term, as every edge is ``taken twice into account'' in the numerator. 
Another way of motivating~\eqref{eq:approx_q} is to observe that when the graph $G$ is a tree, then the probability function can be written in this form and, wishfully, one may expect that for other graphs it might serve as a good estimate.
Assuming such a special form of $q$, the program~\eqref{eq:entropy_prog} reduces to
\begin{equation}
\begin{aligned}
	\sup_{\alpha, \beta}~~ &\sum_{a\in F} \sum_{c\in \{0,1\}^{\nhd a}} \alpha_a(c) \log \frac{g_a(c)}{\alpha_a(c)} -\sum_{e\in E}H(\beta_e)\\
	\st ~~ &  (\alpha, \beta) \in \Gamma(G)&\\
\end{aligned}
\label{eq:bethe}
\end{equation}
where $H$ is the binary entropy function (i.e., $H(x) = -x\log x - (1-x) \log (1-x)$ for $x\in [0,1]$) and $\Gamma(G)$ is the set of all marginal vectors which satisfy {\it local agreement constraints} (it is thus called the {\it pseudo-marginal polytope}). 
This means that $\beta_e$ and $\alpha_a$ are as above and they satisfy:
$$\sum_{c\in \{0,1\}^{\nhd a}} \alpha_a(c) \cdot c = \beta_a~~~~~~~\mbox{for every }a\in F.$$
The optimal value of~\eqref{eq:bethe} is called the Bethe partition function and its exponential is denoted by $Z_B(G)$.
One expects that $Z_B(G)$ is a decent approximation to $Z(G)$, which has been confirmed empirically for various examples of factor graphs. 

However, in general, $Z_B(G)$ can be an arbitrarily bad approximation to $Z(G)$, as for instance it might be positive for some cases where $Z(G)=0$. 
From a theoretical viewpoint, not much is known about the behavior of Bethe approximation. 
The main source of difficulty in understanding this relaxation is its non-convexity, which in particular manifests itself in multiple local optima. 
In the current paper we derive some sufficient conditions under which the Bethe partition function lower-bounds the true partition function. 

\subsection{Related Work} 
The notion of free energy that appears as the objective in the Bethe partition function was formulated in~\cite{Bethe35} in the physics literature. 
See also~\cite{Mori13} and references therein for more historical notes on Bethe approximation.
The correspondence between Bethe approximation and the belief propagation algorithm was explicitly derived in~\cite{YFW05}. 
This combined with the work~\cite{Pearl88} on the belief propagation method implies that Bethe approximation gives exact values of the partition function on tree factor graphs. 
It is also known that Bethe partition function gives precise estimates in the asymptotic sense on locally tree-like graphs \cite{DM10}.

In the work~\cite{CC06}, the loop series expansion of the Bethe partition function was introduced, which is a tool to study the relation between the Bethe partition function and the true partition function.
In~\cite{CC06} the loop expansion was used to prove that Bethe approximation gives a good estimate on the number of independent sets on graphs with small maximum degree and large girth.

The problem of computing permanents of nonnegative matrices  has been also intensively studied in the context of  Bethe approximation~\cite{WC10, Vontobel13, Gurvits11, GS14}.
Recall that the permanent of a matrix $A\in \R^{n\times n}$ is defined to be $$\per(A):=\sum_{\sigma \in S_n} \prod_{i=1}^n A_{i,\sigma(i)}$$ and the problem of computing it is a canonical example of a {\bf \#P}-hard problem~\cite{Valiant79}, hence no polynomial time exact algorithm is expected to exist. 
This problem can be formulated in a natural way as evaluating a certain partition function $Z(G)$~\cite{WC10, Vontobel13} and hence one can investigate the question on how well the Bethe partition function does approximate permanents. 

It has been observed~\cite{Vontobel13} that unlike in the general case, for permanents the program~\eqref{eq:bethe} is convex. 
This allows one to analyze the optimality via KKT conditions and to conclude that $Z_B(G) \leq Z(G)$ using a  permanental inequality due to~\cite{Schrijver98}. 
The success of this approach crucially relies  on the existence of a convex form of the Bethe approximation, this seems to be an exception rather than a rule among various factor graphs.

The Bethe approximation was also studied in the context of the {\it Ising model}~\cite{SudderthWW07}, and shown to lower-bound the true partition function for the {\it ferromagnetic} case under certain technical assumption.
This result was extended by~\cite{Ruozzi12} to the class of all {\it log-supermodular} (also called {\it attractive}) factor graphs.
A factor graph is called log-supermodular if every local function is log-supermodular, i.e., for every $a\in F$ we have
$$\forall_{\sigma, \tau \in \{0,1\}^{ \nhd a}}~~~~g_a(\sigma)\cdot g_a(\tau) \leq g_a(\sigma \vee \tau) \cdot g_a(\sigma \wedge \tau),$$
where $\vee$ and $\wedge$ denote entry-wise OR and entry-wise AND respectively. 
The proof is based on the following combinatorial characterization of the Bethe approximation, due to~\cite{Vontobel13,Vontobel13a}. 
It says that
$$Z_B(G) = \limsup_{k \to \infty} \sqrt[k]{\mathbb{E}_{H \in G^{(k)}} Z(H)},$$
where $G^{(k)}$ is the set of $k$-covers of the factor graph $G$, and the expectation is over a uniformly random choice of $H$ in $G^{(k)}$ (for details we refer to~\cite{Vontobel13a}). 
It follows that in order to prove that $Z_B(G) \leq Z(G)$ for a given factor graph $G$, it is enough to prove that for every $k\in \N$
\begin{equation}\label{eq:cover_ineq}
Z(H) \leq Z(G)^k,~~\mbox{for every $k$-cover $H$ of $G$}.
\end{equation} 
This is the main idea behind the reasoning of~\cite{Ruozzi12}; the inequality~\eqref{eq:cover_ineq} is then proved using a certain generalization of the four function theorem~\cite{AD78}. 
In the context of attractive models, several conjectures regarding similar lower bounds were stated in~\cite{Watanabe11}, out of which only one (for independent sets on bipartite graphs) has been so far resolved (by the above result of~\cite{Ruozzi12}).

Finally we mention that this paper is inspired by recent developments in the theory of real stable polynomials~\cite{BBL09,BB09,BB09b} and the works of~\cite{Gurvits06,SV17,AO17}, where several polynomial based relaxations are considered; for details, we refer the reader to Remark~\ref{rem:ao_sv}.

\section{Our Contribution}
\subsection{Polynomial Form of Bethe Approximation}
The main conceptual result of this paper is  a new approach to prove inequalities between the Bethe partition function and the true partition function. We start by presenting an alternative view on the Bethe approximation -- through the lens of polynomials. 
Towards this, let us first define the polynomial representation of local functions. 
For any $a\in F$ we define a multivariate polynomial $h_a$ over a set of $|\nhd{a}|$ variables $x_a:=\{x_{a,e}\}_{e\in  \nhd a}$ as follows
\begin{equation}\label{eq:poly_f}
h_a(x_a) := \sum_{{\sigma}\in \{0,1\}^{\nhd{a}}} h_{a,{\sigma}} x_a^{{\sigma}},
\end{equation}
where $x_a^{\sigma}$ is a monomial defined as $x_a^{\sigma}:=\prod_{e\in \nhd{a}} x_{a,e}^{{\sigma}_e}$ and the coefficient $h_{a,{\sigma}}$ is given by $h_{a,{\sigma}} := g_a(\sigma)$. 
We prove the following, alternative characterization of the Bethe partition function as a polynomial optimization problem. 
In the statement below we use the convenient notation that for two vectors $x,{\sigma}\in \R^k$, $x^\sigma := \prod_{i=1}^k x_i^{\sigma_i}$.
\begin{theorem}[\textbf{Bethe Approximation via Polynomials}]\label{theorem:poly}
Let $G$ be a normal factor graph with a set of factors $F$ and a set of variables $E$. For every factor $a\in F$ let $h_a$ be the corresponding $|\nhd{a}|$-variate polynomial. Then the Bethe partition function can be written as
$$Z_B(G) = \max_{\beta \in [0,1]^E} \insquare{ \prod_{e\in E} \beta_e^{\beta_e} (1-\beta_e)^{1-\beta_e} \inf_{x>0} \prod_{a\in F} \frac{h_a(x_a)}{x_a^{\beta_a}} }$$
\end{theorem}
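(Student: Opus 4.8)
The plan is to start from the definition of $Z_B(G)$ as the exponential of the optimum of the Bethe free energy program~\eqref{eq:bethe}, and to carry out the inner optimization over the factor pseudo-marginals $\alpha$ while holding the edge-marginal vector $\beta$ fixed. The only coupling between the distributions $\{\alpha_a\}_{a\in F}$ in~\eqref{eq:bethe} is through the local agreement constraints $\sum_{c\in\{0,1\}^{\nhd a}}\alpha_a(c)\cdot c = \beta_a$, and each such constraint involves only the single distribution $\alpha_a$. Hence, for fixed $\beta$, the inner supremum decouples completely across factors, and the objective of~\eqref{eq:bethe} becomes $-\sum_{e} H(\beta_e)$ plus a sum over $a\in F$ of the per-factor problems $\max_{\alpha_a}\sum_{c}\alpha_a(c)\log\frac{g_a(c)}{\alpha_a(c)}$ subject to $\alpha_a$ being a probability distribution on $\{0,1\}^{\nhd a}$ with marginals $\beta_a$.

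Each per-factor problem is a maximum-(relative-)entropy problem with affine moment constraints, hence a concave maximization to which finite-dimensional convex duality applies. Introducing a multiplier $t_e$ for each marginal constraint, the stationarity condition forces the optimizer to be the tilted distribution $\alpha_a^\star(c) = g_a(c)\,x_a^c/h_a(x_a)$, where $x_{a,e}=e^{t_e}>0$ and $h_a$ is exactly the polynomial of~\eqref{eq:poly_f}. Substituting $\alpha_a^\star$ back into the objective and using the marginal identity $\sum_c\alpha_a^\star(c)\,c_e=\beta_e$ collapses the expression to $\log h_a(x_a)-\sum_{e\in\nhd a}\beta_e\log x_{a,e}=\log\bigl(h_a(x_a)/x_a^{\beta_a}\bigr)$. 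The cleanest way to package this is to observe that $t\mapsto\log h_a(e^t)$ is the log-partition function of the exponential family with base weights $g_a$, so it is convex, its gradient equals the mean map $x_a\mapsto(\sum_c\alpha_a(c)c_e)_e$, and the per-factor maximum-entropy value is precisely its Legendre conjugate evaluated at $\beta_a$; concretely $\max_{\alpha_a}\sum_c\alpha_a(c)\log\frac{g_a(c)}{\alpha_a(c)}=\inf_{x_a>0}\log\bigl(h_a(x_a)/x_a^{\beta_a}\bigr)$, with the stationary point on the right reproducing the moment-matching condition.

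With the per-factor identity in hand, I would reassemble the program. Since the variable blocks $x_a=\{x_{a,e}\}_{e\in\nhd a}$ are disjoint across factors, the joint infimum over $x>0$ separates as a product, giving $\sum_a\inf_{x_a>0}\log(h_a(x_a)/x_a^{\beta_a})=\inf_{x>0}\log\prod_a h_a(x_a)/x_a^{\beta_a}$. Combining with the entropy term via $-\sum_e H(\beta_e)=\log\prod_e\beta_e^{\beta_e}(1-\beta_e)^{1-\beta_e}$ and taking the exponential of the outer supremum over $\beta\in[0,1]^E$ then yields exactly the claimed formula for $Z_B(G)$.

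The main obstacle I anticipate is the careful handling of boundary and degeneracy cases in the duality step. When $\beta_a$ lies on the boundary of, or outside, the Newton polytope of $h_a$ — equivalently, when the moment constraints are satisfiable only on the relative boundary of the simplex, or are infeasible — the optimal $x_a$ may escape to the boundary of $\R_{>0}^{\nhd a}$ and the infimum need not be attained, and the $\alpha_a$-problem may even be infeasible. I would verify that in these regimes both sides degenerate consistently: infeasibility of the $\alpha_a$-problem (value $-\infty$) matches $\inf_{x_a>0}h_a(x_a)/x_a^{\beta_a}=0$, and local functions $g_a$ with vanishing entries (so $h_a$ is missing monomials) are accommodated because the tilted distribution is automatically supported on $\supp(g_a)$. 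Establishing that strong duality holds without a relative-interior assumption, and that the resulting $\sup$--$\inf$ genuinely equals the value of~\eqref{eq:bethe}, is the delicate part of the argument; everything else is a direct substitution.
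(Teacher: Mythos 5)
Your proposal is correct and follows essentially the same route as the paper: decouple the $\alpha$-optimization across factors for fixed $\beta$ (the paper's Lemma~\ref{lemma:reform1}), identify each per-factor problem as a max-entropy program whose dual is $\inf_{x_a>0}\log\bigl(h_a(x_a)/x_a^{\beta_a}\bigr)$ (the paper's Lemma~\ref{lemma:entro_poly}, which cites \cite{SinghV14,SV17} for the duality you sketch via the Legendre conjugate of the log-partition function), and then reassemble and exponentiate. Your additional care about boundary and infeasibility cases is a welcome elaboration of a step the paper delegates to references, but it does not change the structure of the argument.
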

In the above statements $x$ stands for a vector which collects all variables $x_{a,e}$ for $a\in F$ and $e\in \nhd{a}$.
The proof of Theorem~\ref{theorem:poly} appears in Section~\ref{sec:bethe_poly}. 
It is established by adapting a dual view on the max-entropy program which defines the Bethe partition function. 

\subsection{Lower Bound on the Partition Function}

Technically, we prove that assuming a certain geometric condition on the factor graph $G$, the Bethe approximation provides a lower bound on the true partition function. 
This condition captures permanents as a special case (see the example provided in Section~\ref{sec:permanent}).
Below we state a simplified variant of the main technical result in terms of {\it local polynomials} $h_a$.
For a more general statement, which is expressed in the language of probability, as well as a proof of the below theorem, we refer to Section~\ref{sec:proofs}.

\begin{theorem}[\textbf{Lower Bound via Real Stability}]\label{thm:rs}
Let $G$ be a bipartite normal factor graph with a set of factors $F$ and a set of variables $E$. Assume that all the polynomials $h_a$ corresponding to local functions $g_a$ (for $a\in F$) are real stable. Then it holds that
$Z_B(G) \leq Z(G).$
\end{theorem}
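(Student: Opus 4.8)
The plan is to recast both $Z(G)$ and $Z_B(G)$ as quantities attached to two real stable polynomials, one per side of the bipartition, and to bridge them with a capacity--entropy duality. Write $F = F_1 \sqcup F_2$ for the bipartition and set $P_1(y) := \prod_{a \in F_1} h_a\inparen{(y_e)_{e \in \nhd a}}$ and $P_2(y) := \prod_{a \in F_2} h_a\inparen{(y_e)_{e \in \nhd a}}$, both viewed as polynomials in a single variable $y_e$ per edge $e \in E$. Because the graph is bipartite, the sets $\inbraces{\nhd a}_{a \in F_1}$ partition $E$ (and likewise for $F_2$), so the coefficient of $y^\sigma$ in $P_1$ is exactly $\prod_{a \in F_1} g_a(\sigma_{\nhd a})$ and similarly for $P_2$; consequently $Z(G) = \sum_{\sigma \in \inbraces{0,1}^E} \prod_{a\in F} g_a(\sigma_{\nhd a}) = \inangle{P_1, P_2}$, the inner product of the two coefficient vectors in the monomial basis. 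Each $h_a$ is real stable and multiaffine, and products of real stable polynomials in (here disjoint) variables are real stable, so $P_1, P_2$ are multiaffine real stable polynomials with nonnegative coefficients -- equivalently, their normalizations generate strongly Rayleigh distributions. First I would invoke Theorem~\ref{theorem:poly}, noting that the infimum over $x>0$ factorizes across factors since distinct factors own disjoint variables, to rewrite $Z_B(G) = \max_{\beta \in [0,1]^E} e^{-\sum_{e} H(\beta_e)} \capa_\beta(P_1)\,\capa_\beta(P_2)$, where $\capa_\beta(P) := \inf_{y > 0} P(y)/y^\beta$.

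The next step is the capacity--entropy duality: for any $P = \sum_\sigma c_\sigma y^\sigma$ with nonnegative coefficients, $\log \capa_\beta(P) = \max_{\mu} \insquare{H(\mu) + \sum_\sigma \mu(\sigma) \log c_\sigma}$, the maximum over distributions $\mu$ on $\inbraces{0,1}^E$ whose marginal vector is $\beta$ (the maximizer, for real stable $P$, being the strongly Rayleigh tilt of $P$ to marginals $\beta$). Dually, Gibbs' inequality gives $\log Z(G) = \log \sum_\sigma c^{(1)}_\sigma c^{(2)}_\sigma \ge H(\nu) + \sum_\sigma \nu(\sigma)\log c^{(1)}_\sigma + \sum_\sigma \nu(\sigma)\log c^{(2)}_\sigma$ for every distribution $\nu$. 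Fixing the optimal $\beta$ and the two tilts $\mu_1^\star, \mu_2^\star$ achieving $\capa_\beta(P_1), \capa_\beta(P_2)$, the inequality $Z_B(G) \le Z(G)$ reduces -- after cancelling the tilt terms, which is where the requirement that $\nu$ have marginals $\beta$ enters -- to exhibiting a single distribution $\nu$ with marginals $\beta$ satisfying $\sum_\sigma \nu(\sigma) \log \frac{\mu_1^\star(\sigma)\,\mu_2^\star(\sigma)}{\pi_\beta(\sigma)\,\nu(\sigma)} \ge 0$, where $\pi_\beta = \bigotimes_e \mathrm{Bernoulli}(\beta_e)$. Equivalently, writing $Q(y) := \sum_\sigma \frac{\mu_1^\star(\sigma)\mu_2^\star(\sigma)}{\pi_\beta(\sigma)} y^\sigma$, it suffices to prove $\capa_\beta(Q) \ge 1$.

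The crux -- and the only place real stability is indispensable -- is this last inequality $\capa_\beta(Q) \ge 1$, i.e.\ that the two strongly Rayleigh tilts $\mu_1^\star, \mu_2^\star$ sharing marginals $\beta$ can be ``merged'' into one distribution witnessing both capacities. The intended witness is (a marginal-correcting tilt of) $\nu \propto \mu_1^\star \mu_2^\star / \pi_\beta$, and I would establish $\capa_\beta(Q)\ge 1$ through the negative-association/correlation structure of strongly Rayleigh measures, reducing it to an inequality of the form $\sum_\sigma \frac{\mu_1^\star(\sigma)\mu_2^\star(\sigma)}{\pi_\beta(\sigma)} y^\sigma \ge \prod_e y_e^{\beta_e}$ for all $y > 0$. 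That real stability cannot be dropped is visible already on two edges: taking $\mu_1^\star$ supported on $\inbraces{00,11}$ (positively correlated, hence not strongly Rayleigh) and $\mu_2^\star$ on $\inbraces{01,10}$ makes $Q \equiv 0$ and $Z(G) = 0$ while $Z_B(G) > 0$, matching the remark that neither bound holds universally. I expect the honest difficulty to lie entirely here -- turning negative association into the pointwise polynomial inequality above (equivalently, controlling the overlap of two strongly Rayleigh measures with prescribed marginals) -- whereas the reductions in the first two paragraphs are formal. A complementary route worth keeping in reserve is Vontobel's $k$-cover characterization, reducing $Z_B(G)\le Z(G)$ to $Z(H) \le Z(G)^k$ for every $k$-cover $H$; under the same inner-product reformulation this becomes a submultiplicativity statement for $\inangle{\cdot,\cdot}$ along covers, again to be driven by real stability.
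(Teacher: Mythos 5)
Your first two paragraphs are sound and run parallel to the paper's own argument: the identification $Z(G)=\sum_{\sigma}l_\sigma r_\sigma$ with $l,r$ the coefficient vectors of the two side-products, and the rewriting of $Z_B(G)$ via Theorem~\ref{theorem:poly} as $\max_{\beta}\bigl[\prod_e \beta_e^{\beta_e}(1-\beta_e)^{1-\beta_e}\,\capa_\beta(P_1)\capa_\beta(P_2)\bigr]$, are exactly how the proof of Lemma~\ref{lemma:main} begins, and your capacity--entropy duality is the paper's Lemma~\ref{lemma:entro_poly}. Your reduction of the theorem, at the optimal $\beta$, to the single inequality $\capa_\beta(Q)\geq 1$ for $Q\propto \mu_1^\star\mu_2^\star/\pi_\beta$ checks out (modulo attainment of the max-entropy tilts in exponential-family form, which needs a limiting argument when $\beta$ sits on the boundary of a Newton polytope).

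The problem is that this crux is never proved, and the tool you nominate for it is not the one that does the job. Negative association of strongly Rayleigh measures gives \emph{upper} bounds on correlations of monotone functions; what you need is a \emph{lower} bound on the capacity of the overlap measure $\mu_1^\star\mu_2^\star/\pi_\beta$, and your ``reduction'' of $\capa_\beta(Q)\geq 1$ to the pointwise inequality $Q(y)\geq y^\beta$ for all $y>0$ is a restatement of the same inequality, not progress toward it. The paper closes this gap with a coordinate-by-coordinate induction (Lemma~\ref{lemma:technical}): apply the operator $\Phi_{z_k,y_k}=(1+\partial_{z_k}\partial_{y_k})\big|_{z_k=y_k=0}$ repeatedly to $q(z)r(y)$, so that after $m$ steps one lands exactly on $\sum_\sigma q_\sigma r_\sigma=Z(G)$; at each step the relevant bivariate restriction $h_{00}+h_{10}z_k+h_{01}y_k+h_{11}z_ky_k$ satisfies $h_{10}h_{01}\leq h_{00}h_{11}$ --- this is precisely where real stability enters, through closure of stability under $\Phi$ (the observation from~\cite{AO17}) together with Br\"and\'en's characterization of multiaffine real stable polynomials (Lemma~\ref{lemma:rs-x}) --- and the elementary two-variable capacity inequality of Lemma~\ref{lem:one_dim} shows the capacity bound survives each application of $\Phi$. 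Some such mechanism converting stability into a chain of $2\times 2$ coefficient inequalities is the actual content of the theorem; your argument stops exactly where it would have to begin. (The two-edge counterexample showing stability cannot be dropped, and the $k$-cover detour, are reasonable remarks but do not substitute for this step.)
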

\noindent
A few comments are in order. 
In the statement above we assume that the NFG $G$ is bipartite. 
This might seem to be restrictive, but as it turns out, every NFG can be converted into an equivalent bipartite form, with at most a double growth in size, hence no real restriction is put on $G$ with this assumption. 
The key condition we require is {\it real stability} of the underlying polynomials.

Real stability is a geometric condition on the location of zeros of a polynomial, which generalizes real-rootedness. 
We say that a polynomial $h\in \R[x_1, \ldots, x_m]$ is real stable if  none of its roots $z=(z_1, \ldots, z_m) \in \mathbb{C}^m$ satisfies: $\Im(z_i)>0$ for every $i=1,2,\ldots, m$. 
Real stable polynomials have recently found numerous applications in mathematics~\cite{BB09, MSS13b} and computer science~\cite{Gurvits06, MSS13a,AO15,NS16,SV17, AO17}  (see also surveys~\cite{Wagner11,Pemantle11,Vishnoi-Survey}).

We remark  that coefficients of multi-affine real stable polynomials are known to be given by log-submodular set functions (see~\cite{Wagner11}), which corresponds to the following assumption on local functions $g_a$ for $a\in F$
$$\forall_{\sigma, \tau \in \{0,1\}^{ \nhd a}}~~~~g_a(\sigma)\cdot g_a(\tau) \geq g_a(\sigma \vee \tau) \cdot g_a(\sigma \wedge \tau).$$
This demonstrates that Theorem~\ref{thm:rs} addresses the opposite case when compared to the result of~\cite{Ruozzi12}, where an analogous result for log-supermodular functions is proved.
These two assumptions turn out to imply significantly different properties of the underlying factor graphs. 

One interesting aspect that is worth mentioning here is that, under log-supermodularity, {\it feasible fractional configurations} are easy to round to {\it integral configurations}.
More precisely, given a point $(\alpha, \beta) \in \Gamma(G)$ whose objective value in the Bethe approximation is finite (larger than $-\infty$), one can obtain (by just rounding up all entries of $\beta$) a configuration $\sigma \in \{0,1\}^E$ such that $g(\sigma)>0$. 
Such a procedure might fail in finding a feasible configuration when G is log-submodular (i.e., the resulting $\sigma$ has $g(\sigma)=0$).
In fact, finding a feasible configuration in such models (even assuming real stability of local polynomials) might be a nontrivial task, even {\bf NP}-complete if no assumptions on the local functions are made.
It turns out in particular, that for the case of permanents, the Bethe approximation is implicitly solving a nontrivial combinatorial optimization problem of detecting if a bipartite graph has a perfect matching. 

\begin{remark}[Upper Bound]
Using the characterization from Theorem~\ref{theorem:poly} one can prove that $Z(G) \leq 2^m \cdot Z_B(G)$.
Indeed, by plugging in $\beta:= \sigma \in \{0,1\}^E$ the term $\prod_{e\in E} \beta_e^{\beta_e} (1-\beta_e)^{1-\beta_e}$ is equal to $1$ and we obtain $$Z_B(G)\geq \sum_{\tau \leq \sigma} g(\tau) \geq g(\sigma)$$ (here by $\tau \leq \sigma$ we mean an entry-wise inequality).  
Hence, altogether, under the assumptions of Theorem~\ref{thm:rs} the Bethe partition function provides a $2^m-$approximation to the true partition function.
\end{remark}

\subsection{Discussion} In this paper we propose a new approach for establishing bounds on the partition function for graphical models based on polynomial techniques. 
This work is inspired by recent developments in the theory of stable polynomials~\cite{Gurvits06,BB09,AO17, SV17} and is an attempt to expand the scope of applicability of these tools. 
While our result seems to require real stability (with respect to the upper-half complex plane) of the underlying polynomials to deduce the desired bound, we believe that other forms of stability, such as stability with respect to a disc, or other analytic assumptions on the polynomials might yield other nontrivial bounds.

Finally, we note  that real stability also improves the computational properties of the Bethe approximation. 
Indeed, the fact that the function $x\mapsto \log p(x)$ is concave, for a real stable polynomial $p\in \R_{\geq 0}[x_1, \ldots, x_m]$,  can be used to show efficient computability of certain relaxations, similar to the Bethe partition function in the polynomial form (see~\cite{SV17, AO17}).
This might eventually lead to designing relaxations which match or even outperform Bethe approximation, while having provably correct and efficient algorithms.

\section{Bethe Approximation via Polynomials}\label{sec:bethe_poly}
In this section we derive an equivalent form of the Bethe partition function -- stated in terms of a polynomial optimization problem.
\subsection{Local Functions as Polynomials}
Consider a NFG $G=(F,E,\{g_a\}_{a\in F})$. In this paper we view the local functions $g_a$  (for $a\in F$) as polynomials. More formally, given a function $g_a: \{0,1\}^{\nhd{a}}\to \R_{\geq 0}$, we define the corresponding polynomial representation of $g_a$ as an $|\nhd a|$-variate polynomial $h_a(x_a)$ over variables $\{x_{a,e}\}_{e\in \nhd{a}}$ given by the formula
$$h_a(x_a) = \sum_{{\sigma}\in \{0,1\}^{\nhd{a}}} h_{a,{\sigma}} x_a^{{\sigma}},$$
where $x_a^{\sigma}$ denotes $\prod_{e\in S} x_{a,e}^{{\sigma}_e}$ and $h_{a,{\sigma}}=g_a({\sigma})$ is the value of the function $g_a$ at ${\sigma}$. Note that even if two factors $a,b\in F$ share an edge $e\in E$, the variables of $h_a$ and $h_b$ are still pairwise different.

\subsection{Bethe Approximation as Polynomial Optimization}

Let $G=(F,E,\{g_a\}_{a\in F})$ be a NFG. Denote by $H(\beta)$ the negative entropy of $\beta\in [0,1]^E$, i.e., $$H(\beta) :=- \inparen{\sum_{e\in E} \beta_e \log \beta_e+(1-\beta_e) \log (1-\beta_e)}.$$ We use $\kl(p,q)$ to denote the KL-divergence between two nonnegative vectors $p,q\in \R^{k}_{\geq 0}$ (typically probability distributions), 
$$\kl(p,q):= \sum_{i=1}^k p_i \log \frac{p_i}{q_i}.$$
 The Bethe approximation problem can be then rewritten as
$$\log Z_B(G) = \max_{(\alpha, \beta)\in \Gamma(G)} -\sum_{a\in F} \kl(\alpha_a, g_a) -H(\beta),$$
where $\Gamma(G)$ is the pseudo-marginal polytope, as introduced in Section~\ref{sec:defs}. 
We define the following entropy maximization problem.
\begin{definition}
Let $f:\{0,1\}^k \to \R_{\geq 0}$ be any function with $C(f)=\{{\sigma}\in \{0,1\}^k: f({\sigma})>0\}$ and $\beta\in [0,1]^k$ be any vector.  We define $\emax(f,\beta)$  to be the optimal value of the following optimization problem over vectors $\alpha \in \R^{C(f)}$
\begin{equation}
\begin{aligned}
	\max_{\alpha}~~ &-\kl(\alpha, f)\\
	\st ~~ &  \sum_{c\in C(f)} \alpha(c)\cdot c=\beta,&\\
	& \sum_{{\sigma}\in C(f)} \alpha_{\sigma} = 1\\
	& \alpha\geq 0. &
\end{aligned}
\label{eq:kl}	
\end{equation}
In case when no $\alpha$ satisfies the above constraints, we set $\emax(f,\beta)=-\infty$.
\end{definition}

\begin{lemma}\label{lemma:reform1}
For every normal factor graph $G$, the Bethe approximation can be stated equivalently as
$$\log Z_B(G) = \max_{\beta \in [0,1]^E} \sum_{a\in F}\emax(g_a,\beta_a)-H(\beta).$$
\end{lemma}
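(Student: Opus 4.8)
The plan is to start from the reformulation of the Bethe partition function stated immediately above the lemma, namely
$$\log Z_B(G) = \max_{(\alpha,\beta)\in\Gamma(G)} \inparen{-\sum_{a\in F}\kl(\alpha_a,g_a) - H(\beta)},$$
and to show that this nested maximization \emph{decouples across factors} once $\beta$ is held fixed. The core observation is that the objective is separable: the variables $\alpha_a$ attached to a factor $a$ appear only in the single term $\kl(\alpha_a, g_a)$, and $H(\beta)$ does not depend on $\alpha$ at all, while the pseudo-marginal polytope $\Gamma(G)$ couples the $\alpha_a$'s to one another \emph{only} through $\beta$, via the local agreement constraints $\sum_c \alpha_a(c)\, c = \beta_a$ (where $\beta_a$ denotes the restriction $\beta_{\nhd{a}}$).

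Concretely, I would first rewrite the maximum as $\max_{\beta\in[0,1]^E}\max_{\alpha}$, pulling $-H(\beta)$ outside the inner maximization since it is constant in $\alpha$. For a fixed $\beta$, the feasible set for the tuple $(\alpha_a)_{a\in F}$ is a product set: each $\alpha_a$ is independently constrained to be a probability distribution on $\{0,1\}^{\nhd{a}}$ satisfying $\sum_c \alpha_a(c)\,c = \beta_a$, with no surviving interaction between distinct factors. Because the objective $-\sum_a \kl(\alpha_a,g_a)$ is a sum of per-factor terms over this product domain, the inner maximum splits as $\sum_{a\in F}\max_{\alpha_a}\inparen{-\kl(\alpha_a,g_a)}$, and each inner problem is precisely the program defining $\emax(g_a,\beta_a)$. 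Substituting then yields the claimed identity.

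The one point requiring care --- and the only place the argument is more than bookkeeping --- is reconciling the domain of the $\emax$ program with the domain implicit in the Bethe objective. In the definition of $\emax(f,\beta)$ the variable $\alpha$ ranges over $\R^{C(f)}$, i.e.\ it is forced to be supported on $C(g_a)=\supp(g_a)$, whereas a priori the Bethe formulation allows $\alpha_a$ to place mass on configurations $c$ with $g_a(c)=0$. I would resolve this by noting that any such $\alpha_a$ with $\alpha_a(c)>0$ for some $c\notin C(g_a)$ contributes a term $\alpha_a(c)\log\frac{\alpha_a(c)}{g_a(c)}=+\infty$ to $\kl(\alpha_a,g_a)$, hence objective value $-\infty$; therefore restricting to $\supp(\alpha_a)\subseteq C(g_a)$ never discards an optimizer and leaves the per-factor maximum unchanged. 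The same reasoning handles the degenerate cases consistently with the stated conventions: if for a given $\beta$ no admissible $\alpha_a$ exists (the marginal constraint is infeasible over $C(g_a)$), then both the inner Bethe maximum and $\emax(g_a,\beta_a)$ equal $-\infty$, so such $\beta$ simply drop out of the outer maximum on both sides. Finally, I would observe that fixing a single $\beta\in[0,1]^E$ in the outer maximum automatically enforces local agreement, since every edge $e$ is incident to exactly two factors and the common value $\beta_e$ is shared by both corresponding constraints; this is exactly why no cross-factor coupling remains after $\beta$ is fixed, and is what makes the decoupling genuinely valid rather than merely formal.
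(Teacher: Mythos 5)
Your proposal is correct and follows essentially the same route as the paper's own proof: fix $\beta$, observe that the objective and the constraints of $\Gamma(G)$ decouple across factors, and identify each per-factor maximization with the program defining $\emax(g_a,\beta_a)$. Your additional care about configurations outside $\supp(g_a)$ and about infeasible $\beta$ is a welcome tightening of details the paper leaves implicit, but it does not change the argument.
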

\begin{proof}
The objective of the Bethe approximation $-\sum_{a\in F} \kl(\alpha_a, g_a) -H(\beta)$ has separated $\alpha$ and $\beta$ variables, however they are implicitly coupled because of the $(\alpha, \beta)\in \Gamma$ constraint. For a fixed $\beta$ and a factor $a\in F$ the constraint on $\alpha_a$ following from $(\alpha, \beta) \in \Gamma$ is
$$\sum_{c\in C_a, c_e=1} \alpha_a(c) = \beta_e~~~~~~~~\mbox{for every }e\in E.$$ 
This can be equivalently written in the vector form as
$$\sum_{c\in C_a} \alpha_a(c) \cdot c = \beta_a.$$
Note that maximizing $-\kl(\alpha_a, g_a)$ under this constraint gives us exactly $\emax(g_a, \beta_a)$.
\end{proof}
\noindent 
The lemma below explains how does the entropy maximization problem underlying $\emax$ relate to polynomial optimization.
\begin{lemma}\label{lemma:entro_poly}
Let $f:\{0,1\}^k \to \R_{\geq 0}$ and  $\beta\in [0,1]^k$ be any vector. Define a $k$-variate, multi-linear polynomial $h\in \R[x_1, \ldots, x_k]$ to be $h(x) = \sum_{{\sigma}\in \{0,1\}^k } h_{{\sigma}} x^{\sigma}$ with $h_{\sigma}:= f({\sigma})$.  We have
\begin{equation}
	\emax(f,\beta)=\inf_{x\in \R^k, x>0}~~\log h(x) - \sum_{i=1}^k \beta_i \log x_i.
\end{equation}
\end{lemma}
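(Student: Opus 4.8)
The plan is to recognize \eqref{eq:kl} as a convex program — the objective $-\kl(\alpha,f)$ is concave in $\alpha$ and the two constraints are linear — and to show that the claimed right-hand side is exactly its Lagrangian dual written in the exponential coordinates $x_i = e^{-\lambda_i}$. First I would establish the easy (weak-duality) inequality $\inf_{x>0}\insquare{\log h(x)-\sum_i \beta_i\log x_i}\geq \emax(f,\beta)$ by a direct Jensen argument. Fix any feasible $\alpha$ and any $x>0$. Since $\sum_{\sigma} \alpha_\sigma\sigma=\beta$, the linear term rewrites as $\sum_i\beta_i\log x_i=\sum_{\sigma} \alpha_\sigma\log x^{\sigma}$, so that $\log h(x)-\sum_i\beta_i\log x_i=\log\inparen{\sum_{\sigma} \alpha_\sigma \tfrac{f_\sigma x^{\sigma}}{\alpha_\sigma}}-\sum_{\sigma} \alpha_\sigma\log x^{\sigma}$ (the sum being over $\sigma\in C(f)$ with $\alpha_\sigma>0$, noting $h_\sigma=f_\sigma$). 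Applying concavity of $\log$ against the probability weights $\alpha$ bounds the first term below by $\sum_{\sigma}\alpha_\sigma\log\tfrac{f_\sigma x^{\sigma}}{\alpha_\sigma}$, and after the $\log x^{\sigma}$ terms cancel this equals $-\kl(\alpha,f)$. Taking the supremum over feasible $\alpha$ and the infimum over $x$ gives the inequality.

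Second, for the reverse inequality I would exhibit a matching primal point using the fact that Jensen is tight exactly when $f_\sigma x^{\sigma}/\alpha_\sigma$ is constant across the support, i.e. when $\alpha_\sigma = f_\sigma x^{\sigma}/h(x)$. Such an $\alpha$ lies automatically on the simplex, and a short computation shows that the critical-point condition for the convex dual objective $\Phi(x):=\log h(x)-\sum_i\beta_i\log x_i$ is $\tfrac{x_j\,\partial_{x_j}h(x^\star)}{h(x^\star)}=\beta_j$ for all $j$ — which is precisely the statement that the Gibbs distribution $\alpha^\star_\sigma=f_\sigma (x^\star)^{\sigma}/h(x^\star)$ has marginal vector $\beta$, hence is primal feasible. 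For such an $x^\star$ the weak-duality chain holds with equality, so $\Phi(x^\star)=-\kl(\alpha^\star,f)\leq \emax(f,\beta)$, which combined with the first part forces equality.

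Third I would justify existence of the critical point and dispatch the degenerate cases. Writing $\lambda_i=-\log x_i$, the objective becomes $\log\inparen{\sum_{\sigma} f_\sigma e^{-\inangle{\lambda,\sigma}}}+\sum_i\beta_i\lambda_i$, a log-sum-exp with nonnegative weights plus a linear term, hence convex in $\lambda$; thus a critical point is a global minimizer, and whenever $\beta$ lies in the relative interior of the convex hull of $C(f)$ the minimum is attained at some $x^\star>0$ and Slater's condition holds for the primal, giving strong duality. The main obstacle I expect is the boundary behaviour rather than this interior computation. When $\beta$ lies outside the convex hull of $C(f)$ the primal is infeasible, so $\emax(f,\beta)=-\infty$, and I must verify that $\Phi$ is then unbounded below by pushing $x$ to $0$ or $\infty$ along the direction separating $\beta$ from the convex hull; and when $\beta$ lies on the boundary of the hull the minimizer may escape to the boundary of the positive orthant, so a limiting argument is needed in place of a genuine interior critical point. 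Carefully matching both sides in these non-interior regimes is where the real work lies.
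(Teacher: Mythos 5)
Your proposal is correct and takes essentially the same route as the paper, which proves this lemma in a single line by invoking strong duality for the max-entropy program~\eqref{eq:kl} and deferring details to the cited references; your Jensen-based weak-duality inequality together with the Gibbs-distribution critical point $\alpha^\star_\sigma = f_\sigma (x^\star)^\sigma/h(x^\star)$ is exactly that duality worked out. The degenerate regimes you flag at the end are indeed the only technical residue, and your sketched treatments (a separating hyperplane sending the objective to $-\infty$ when $\beta\notin\mathrm{conv}(C(f))$, and a limiting argument along the normal of the face containing $\beta$ when $\beta$ lies on the relative boundary) are the standard way to close them.
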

\begin{proof}
A proof follows by applying strong duality to the max-entropy program~\eqref{eq:kl}. For details, see \cite{SinghV14, SV17}.
\end{proof}
\noindent 
Theorem~\ref{theorem:poly} is now a simple consequence of the above established results.
\begin{proofof}{of Theorem~\ref{theorem:poly}}
From Lemma~\ref{lemma:reform1} we have 
$$\log Z_B(G) = \max_{\beta \in [0,1]^E} \sum_{a\in F}\emax(g_a,\beta_a)-H(\beta).$$
Next, by Lemma~\ref{lemma:entro_poly} this can be rewritten as
$$\log Z_B(G) = \max_{\beta\in [0,1]^E } \sum_{a\in F}\inf_{x_a>0}\inparen{\log h_a(x_a)- \sum_{e\in \partial a} \beta_e \log x_{a,e} }-H(\beta).$$
By taking exponentials on both sides
$$Z_B(G) = \max_{\beta\in [0,1]^E} \prod_{e\in E} \beta_e^{\beta_e} (1-\beta_e)^{1-\beta_e} \prod_{a\in F} \inf_{x_a>0} \frac{h_a(x_a)}{\prod_{e\in \partial a} x_{a}^{\beta_a}}.$$
\end{proofof}

\section{Proof of the Lower Bound}\label{sec:proofs}

To prove Theorem~\ref{thm:rs} we first formulate a more general condition which we call IPC, and prove that under IPC, the inequality $Z_B(G) \leq Z(G)$ holds. Afterwards we conclude the proof by showing that the assumption of Theorem~\ref{thm:rs} implies that IPC is satisfied.
\subsection{The IPC}

To state IPC we need to introduce some notation related to the bipartite structure of the factor graph $G=(F,E)$.
Let the set of factors $F$  be partitioned into two sets $L$ and $R$ such that no edges go between factors within $L$ or within $R$, only between these two sets.
Next, for any ${\sigma}\in \{0,1\}^E$ we define
$$l_{\sigma} = \prod_{a\in L} g_a({\sigma}_{\nhd{a}}) ~~~~~~~~\mbox{and} ~~~~~~~~~r_{\sigma} = \prod_{a\in R} g_a({\sigma}_{\nhd{a}}).$$
Furthermore we define the normalized variants of $l$ and $r$ to be $p^L_{\sigma} = \frac{l_{\sigma}}{\sum_{{\sigma}'}l_{{\sigma}'}}$, $p^R_{\sigma} = \frac{r_{\sigma}}{\sum_{{\sigma}'} r_{{\sigma}'}}$.
We refer to $p^L, p^R$ as to the distributions induced by $L$ (the ``left'' side of the bipartition) and induced by $R$ (the ``right'' side of the bipartition) respectively.

We are now ready to state a condition on the pair of distributions $(p^L, p^R)$ which will turn out sufficient for the inequality $Z_B(G) \leq Z(G)$ to hold. 
\begin{definition}[Iterated Positive Correlation]\label{def:propx}
Let $q,r$ be probability distributions over $\{0,1\}^m$ and let $X,Y \in \{0,1\}^m$ be distributed according to $q$ and $r$ respectively. Define the event $EQ_k$ to be $X_j=Y_j$ for all $j=1,2, \ldots, k$. For any two sequences of positive reals $s\in \R_{>0}^m$ and $t\in \R_{>0}^m$ and for any pair  $A,B\in \{0,1\}$ define
$$E_k(A,B) = \mathbb{E} \insquare{\prod_{j=k+1}^m s_j^{X_j} t_j^{Y_j} \cdot \mathbbm{1}_{X_k=A}\cdot \mathbbm{1}_{Y_k=B} \bigg|  EQ_{k-1}}$$
Where the expectation is over $X$ and $Y$, assuming $X,Y$ are independent. 
We say that the pair of distributions $(q,r)$ satisfies the Iterated Positive Correlation (IPC) property if
$$E_k(0,1) \cdot E_k(1,0)  \leq E_k(0,0)  \cdot E_k(1,1) $$
for every $k\in [m]$ and  for every $s,t \in \R^{m}_{>0}$.
\end{definition}
Note that in the definition above we implicitly assume that $\pr[EQ_m]\neq 0$, as otherwise some conditional expectations above might not be well defined.
For the setting which we have in mind, this corresponds to the assumption that $Z(G) \neq 0$.

To gain some intuition about the IPC property it is instructive to examine the special case when $s_1=\ldots = s_m = t_1 = \ldots = t_m=1$. Under the notation $p_k(A,B) := \pr[X_k=A\wedge Y_k=B| EQ_{k-1}]$  we obtain
$$p_k(0,1) \cdot p_k(1,0) \leq p_k(0,0)\cdot p_k(1,1),$$
which can be seen as a form of iterated (as $k=1,2,\ldots, m$) positive correlation between subsequent $X_k$'s and $Y_k$'s. In other words, it quantifies, in a certain sense the fact that conditioned on $X_i=Y_i$ for $i=1,2, \ldots, k-1$, it is more likely to see $X_k=Y_k$ rather than $X_k \neq Y_k$. 
We are now ready to state the main technical lemma of the paper, which asserts that if a NFG $G$ satisfies IPC then $Z_B(G) \leq Z(G)$. 

\begin{lemma}\label{lemma:main}
Let $G$ be a bipartite normal factor graph with a set of factors $F$, a set of variables $E$ and bipartition $F=L \cup R$. Let $p^L$ and $p^R$ be the distributions over $\{0,1\}^E$ induced by the left side and the right side of the bipartition of $G$ respectively. If the pair $(p_L, p_R)$ satisfies the IPC property then
$$Z_B(G) \leq Z(G) .$$
\end{lemma}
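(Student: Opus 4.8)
The plan is to combine the polynomial form of the Bethe partition function (Theorem~\ref{theorem:poly}) with the bipartite structure to collapse $Z_B(G)\le Z(G)$ into a single positive-correlation inequality under a product measure, and then to establish that inequality by iterating the IPC condition coordinate by coordinate.

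First I would exploit bipartiteness. Since no edge joins two factors on the same side, the edge sets $\{\nhd{a}\}_{a\in L}$ partition $E$, and likewise $\{\nhd{a}\}_{a\in R}$; hence $l_\sigma=\prod_{a\in L}g_a(\sigma_{\nhd a})$ and $r_\sigma=\prod_{a\in R}g_a(\sigma_{\nhd a})$ are the coefficients of two multiaffine generating polynomials $P_L(y)=\sum_\sigma l_\sigma y^\sigma$ and $P_R(y)=\sum_\sigma r_\sigma y^\sigma$ over a single set of variables $\{y_e\}_{e\in E}$, while $Z(G)=\sum_\sigma l_\sigma r_\sigma$. Grouping the $L$- and the $R$-factors (whose edge sets partition $E$) merges their contributions in Lemma~\ref{lemma:reform1}, and Lemma~\ref{lemma:entro_poly} then gives
\[
\log Z_B(G)=\max_{\beta\in[0,1]^E}\Big[\,\emax(l,\beta)+\emax(r,\beta)-H(\beta)\,\Big],
\]
with $\emax(l,\beta)=\inf_{u>0}\big(\log P_L(u)-\sum_e\beta_e\log u_e\big)$ and similarly for $r$.

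Next I would fix a maximizing $\beta^\star$ together with tilts $u^\star,v^\star$ attaining the two inner infima, and read off the first-order conditions. Stationarity of the inner problems forces the tilted distributions $q^L_\sigma\propto l_\sigma(u^\star)^\sigma$ and $q^R_\sigma\propto r_\sigma(v^\star)^\sigma$ to have a common marginal vector $\beta^\star$, while stationarity in $\beta$ (via the envelope theorem) gives $u^\star_e v^\star_e=\beta^\star_e/(1-\beta^\star_e)$ for every $e$. Substituting, one finds on one hand $Z_B(G)=P_L(u^\star)P_R(v^\star)\prod_e(1-\beta^\star_e)$, and on the other $Z(G)=P_L(u^\star)P_R(v^\star)\prod_e(1-\beta^\star_e)\sum_\sigma q^L_\sigma q^R_\sigma/B_\sigma$. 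Dividing through, the inequality $Z_B(G)\le Z(G)$ becomes equivalent to the clean statement
\[
\sum_{\sigma}\frac{q^L_\sigma\,q^R_\sigma}{B_\sigma}\ \ge\ 1,\qquad B_\sigma:=\prod_e(\beta^\star_e)^{\sigma_e}(1-\beta^\star_e)^{1-\sigma_e},
\]
that is, $\mathrm{Cov}_B(q^L/B,\,q^R/B)\ge 0$: relative to the product measure $B$ matching the common marginals, the two likelihood ratios are nonnegatively correlated.

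Finally I would prove this correlation inequality by revealing the coordinates $e=1,\dots,m$ one at a time and conditioning on the agreement event $EQ_{k-1}=\{X_j=Y_j,\ j<k\}$ from Definition~\ref{def:propx}, for $X\sim q^L$ and $Y\sim q^R$ independent. Telescoping the left-hand side over these events reduces the level-$k$ increment to a two-state quantity whose sign is controlled by $(\pr[X_k=1\mid\cdot]-\beta^\star_k)(\pr[Y_k=1\mid\cdot]-\beta^\star_k)$ averaged over the conditioning, which is exactly the positive correlation of $X_k$ and $Y_k$ furnished by the IPC inequality $E_k(0,1)E_k(1,0)\le E_k(0,0)E_k(1,1)$ once the free tilts $s,t$ of Definition~\ref{def:propx} are specialized to $u^\star,v^\star$. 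I expect the main obstacle to be precisely this matching: conditioning on past agreement shifts the conditional marginals away from $\beta^\star$, so the bound cannot be obtained coordinate-by-coordinate in isolation, and it is the iterated formulation of IPC---agreement-conditioning together with \emph{arbitrary} future tilts---that lets the per-level estimates chain into the global inequality. Carefully bookkeeping the reweightings so that each level lands on the IPC inequality, and separately handling the degenerate cases ($\beta^\star_e\in\{0,1\}$, or infima not attained, which is where the hypothesis $Z(G)\ne0$ enters), is where the real work lies.
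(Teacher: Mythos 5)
Your reduction up to the correlation inequality is sound and, in its first half, tracks the paper: grouping the factors on each side of the bipartition into two multiaffine polynomials with coefficients $l_\sigma$ and $r_\sigma$, invoking Theorem~\ref{theorem:poly}, and closing with $Z(G)=\sum_\sigma l_\sigma r_\sigma$ is exactly how the paper's proof of Lemma~\ref{lemma:main} is organized (its $q,r$ are your $P_L,P_R$). Your KKT step --- stationarity forcing common marginals $\beta^\star$ for the tilted measures and $u^\star_e v^\star_e=\beta^\star_e/(1-\beta^\star_e)$, hence $Z_B(G)\le Z(G)$ if and only if $\sum_\sigma q^L_\sigma q^R_\sigma/B_\sigma\ge 1$ --- is a correct and genuinely different packaging, modulo the attainment and boundary caveats you flag (the paper sidesteps these entirely by proving the bound for every fixed $\beta$ using $\eps$-approximate minimizers rather than working at an optimizer).

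The gap is the last step, and it is not just bookkeeping. Once the tilts are frozen at $u^\star,v^\star$, the natural telescoping quantity is $S_k=\mathbb{E}\bigl[\prod_{j\le k}\mathbbm{1}_{X_j=Y_j}\,\beta_j^{-X_j}(1-\beta_j)^{-(1-X_j)}\bigr]$, with $S_0=1$ and $S_m=\sum_\sigma q^L_\sigma q^R_\sigma/B_\sigma$. Unwinding $S_k\ge S_{k-1}$ (the prefix reweighting cancels exactly because $u^\star_jv^\star_j=\beta^\star_j/(1-\beta^\star_j)$), the level-$k$ increment being nonnegative is equivalent, up to a common normalization, to
$$u_k v_k\,E_k(0,0)+(u_k v_k)^2E_k(1,1)\;\ge\;u_k\,E_k(1,0)+v_k\,E_k(0,1),$$
where the $E_k$'s of Definition~\ref{def:propx} are evaluated at the specific tilts $s=u^\star$, $t=v^\star$. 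This does \emph{not} follow from $E_k(0,1)E_k(1,0)\le E_k(0,0)E_k(1,1)$: take all four $E_k$'s equal and $u_k=v_k=\nfrac12$ and the displayed inequality fails ($\nfrac{5}{16}\ge 1$). Whether such values can occur at a genuine optimum is beside the point --- it shows the per-level estimate requires global optimality information that IPC does not encode, so the chain does not close coordinate by coordinate as described, and your own description (controlling the increment by $(\pr[X_k=1\mid\cdot]-\beta^\star_k)(\pr[Y_k=1\mid\cdot]-\beta^\star_k)$) asks for a sign condition that positive conditional correlation of $(X_k,Y_k)$ simply does not provide. The paper's Lemma~\ref{lemma:technical} avoids exactly this trap by never freezing the tilts: its downward induction retains the infimum over all remaining variables at every level, and the one-level bound is the variational Lemma~\ref{lem:one_dim}, which re-optimizes $(z_k,y_k)$ --- this is where the factor $\beta_k^{\beta_k}(1-\beta_k)^{1-\beta_k}$ and the nonnegativity of a KL divergence enter --- and uses IPC only to certify its hypothesis $h_{10}h_{01}\le h_{00}h_{11}$ via Lemma~\ref{lemma:poly-and-propx}. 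To repair your argument you would need to re-introduce an infimum over the $k$-th pair of variables at each level of the telescope, at which point you have reconstructed the paper's proof.
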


\noindent 
A proof of Lemma~\ref{lemma:main} appears in Section~\ref{ssec:proof_lb}.
To conclude Theorem~\ref{thm:rs} from the above it suffices to argue that the real stability assumption on local polynomials implies IPC. This is the subject of the next lemma

\begin{lemma}[Real Stability implies IPC]\label{lemma:rs-x}
Let $G$ be a bipartite normal factor graph with a set of factors $F$ and a set of variables $E$. 
Assume that all the polynomials $h_a$ corresponding to local functions $g_a$ (for $a\in F$) are real stable. Let $p^L$ and $p^R$ be the distributions over $\{0,1\}^E$ induced by the left side and the right side of the bipartition of $G$ respectively. 
Then the pair $(p^L, p^R)$ satisfies the IPC property.
\end{lemma}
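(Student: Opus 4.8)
The plan is to pass from the two distributions $p^L$ and $p^R$ to their generating polynomials and to read the IPC inequality off a suitable two-variable specialization. First I would observe that, since $G$ is bipartite, every edge $e \in E$ is incident to exactly one factor in $L$ and one in $R$; hence the edge sets $\{\nhd a : a \in L\}$ partition $E$, and the variables $x_{a,e}$ of the left factors may be identified with a single variable $x_e$ per edge. Consequently $h^L(x) := \prod_{a \in L} h_a(x_{\nhd a})$ and $h^R(x) := \prod_{a \in R} h_a(x_{\nhd a})$ are multi-affine polynomials over $\{x_e\}_{e \in E}$, and since a product of real stable polynomials in disjoint variables is real stable, both $h^L$ and $h^R$ are real stable. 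By construction their coefficients are $l_\sigma$ and $r_\sigma$, so $h^L$ and $h^R$ are (unnormalized) generating polynomials of $p^L$ and $p^R$.

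Next I would fix $k \in [m]$ and weights $s,t \in \R_{>0}^m$ and reduce the inequality of Definition~\ref{def:propx} to a statement about four numbers. Since each $E_k(A,B)$ carries the same conditioning denominator $\pr[EQ_{k-1}]$, the inequality $E_k(0,1)E_k(1,0) \le E_k(0,0)E_k(1,1)$ is equivalent to $N_k(0,1)N_k(1,0) \le N_k(0,0)N_k(1,1)$, where $N_k(A,B) := \mathbb{E}[\prod_{j>k} s_j^{X_j} t_j^{Y_j}\, \mathbbm{1}_{X_k=A}\mathbbm{1}_{Y_k=B}\mathbbm{1}_{EQ_{k-1}}]$. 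I would then realize these as the coefficients of a two-variable polynomial $P(\xi,\eta) = N_k(0,0) + N_k(1,0)\xi + N_k(0,1)\eta + N_k(1,1)\xi\eta$ built from $h^L$ and $h^R$ by three operations: (i) specialize the variables indexed by $j>k$ to $s_j$ in $h^L$ and to $t_j$ in $h^R$; (ii) split off the $X_k=A$ and $Y_k=B$ parts by $\partial_{x_k}$ and by restriction $x_k=0$, introducing the outer variables $\xi,\eta$; and (iii) \emph{merge} the shared coordinates $j<k$, i.e.\ replace each pair of coefficients indexed by $X_j=Y_j$ by their diagonal product and sum, encoding the conditioning on $EQ_{k-1}$. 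Operations (i) and (ii) preserve real stability. The target inequality $N_k(0,1)N_k(1,0)\le N_k(0,0)N_k(1,1)$ is then exactly the assertion that $P$ has \emph{no zeros with $\Im\xi>0$ and $\Im\eta<0$}: a bivariate multi-affine polynomial with nonnegative coefficients satisfies this mixed half-plane condition if and only if $P_{01}P_{10}\le P_{00}P_{11}$ (from the orientation of the M\"obius map $\xi\mapsto\eta$ cut out by $P=0$, or equivalently by checking that $\eta^{\deg_\eta P}P(\xi,1/\eta)$ is ordinary real stable).

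The heart of the argument is step (iii). Because $h^R$ has real coefficients, real stability lets me view it equally as having no zeros when every one of its variables lies in the \emph{lower} half-plane. Thus in the merge each shared coordinate pairs an upper-half-plane variable of $h^L$ with a lower-half-plane variable of $h^R$, while the surviving outer variables are $\xi$ (from $h^L$, upper) and $\eta$ (from $h^R$, lower) -- precisely the mixed region in which I want $P$ to be zero-free. I would establish the required preservation by a half-plane Asano-type contraction lemma: contracting a variable pair in which one factor ranges over the upper and the other over the lower half-plane keeps the polynomial zero-free in the remaining product of half-planes. Applying this once per shared coordinate $j<k$, and finally setting the merged variables to the positive real value $1$ (which preserves stability), yields that $P$ is zero-free on $\{\Im\xi>0\}\times\{\Im\eta<0\}$. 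Equivalently one can run this as an induction on the number of shared coordinates, whose base case is the direct computation
$$P_{00}P_{11} - P_{01}P_{10} = \big(a_{00}a_{11} - a_{10}a_{01}\big)\big(b_{00}b_{11} - b_{10}b_{01}\big),$$
where the $a$'s and $b$'s are the four coefficients of the single-shared-coordinate left and right polynomials; each factor on the right is $\le 0$ by the real stability (negative correlation) of $h^L$ and $h^R$, so the product is $\ge 0$. This is the precise mechanism by which the two internal negative correlations combine into the cross positive correlation demanded by IPC.

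I expect the main obstacle to be exactly this contraction step. Merging shared coordinates does \emph{not} preserve ordinary real stability, so one cannot simply quote closure of the class of real stable polynomials; the content lies in choosing the correct half-plane convention (upper for the variables coming from $h^L$, lower for those from $h^R$) under which an Asano-type contraction is valid, and in verifying that these conventions compose consistently through all $k-1$ shared coordinates down to the bivariate polynomial $P$. Getting this half-plane bookkeeping right, and confirming the base-case sign above, is the delicate part; the remaining operations -- weight specialization, coefficient extraction, and evaluation of the merged variables at $1$ -- are routine stability-preserving manipulations.
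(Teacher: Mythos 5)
Your proposal is correct and follows essentially the same route as the paper: your ``merge'' of the shared coordinates is exactly the operator $\Phi_{z_j,y_j}$ of Lemma~\ref{lemma:poly-and-propx}, your mixed half-plane zero-freeness of $P(\xi,\eta)$ on $\{\Im\xi>0\}\times\{\Im\eta<0\}$ is the paper's statement that $f_k(z,-y)$ is real stable, and the final coefficient inequality is obtained, as in the paper, from Br\"and\'en's characterization of bivariate multi-affine stable polynomials. The half-plane contraction step you correctly identify as the crux is precisely the observation the paper imports from~\cite{AO17}, and your single-shared-coordinate identity $P_{00}P_{11}-P_{01}P_{10}=(a_{00}a_{11}-a_{10}a_{01})(b_{00}b_{11}-b_{10}b_{01})$ checks out as a valid concrete verification of how the two internal negative correlations combine into the cross positive correlation.
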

\noindent 
The proof of Lemma~\ref{lemma:rs-x} appears in Section~\ref{ssec:proof_rs}. We are now ready to deduce Theorem~\ref{thm:rs}.
\begin{proofof}{of Theorem~\ref{thm:rs}}
Lemma~\ref{lemma:main} asserts that the inequality $Z_B(G) \leq Z(G)$ holds under IPC. Further, by Lemma~\ref{lemma:rs-x} the assumption of Theorem~\ref{thm:rs} (saying that local polynomials are real stable) implies that IPC holds. Thus the Theorem~\ref{thm:rs} follows.
\end{proofof}
\begin{remark}
We note that the IPC condition is significantly more general than the real stability assumption in~\ref{thm:rs} and there are examples of factor graphs which do not satisfy real stability, but IPC holds for them.
The downside of IPC might be however that there does not seem to be a simple way to verify it, especially since it is a global condition on the factor graph. On the other hand, the real stability assumption is only local and can be checked easily whenever the degrees of all factors are reasonably small.
\end{remark}
\subsection{Proof of the Lower Bound under IPC}\label{ssec:proof_lb}

In this section, the following linear operator on the set of polynomials is used.

\begin{definition}
Let $h(z_0, z, y_0, y)$ be a real polynomial with $z_0, y_0$ being single variables and $y,z$ being tuples of variables. Define 
$$\Phi_{z_0, y_0} (h) := (1+\partial_{z_0} \partial_{y_0}) h \mid_{z_0=y_0=0}.$$
\end{definition}
In other words, $\Phi_{z_0, y_0}$ first applies the differential operator $(1+\partial_{z_0} \partial_{y_0})$ to $h$ and then sets $z_0=y_0=0$; the result is a polynomial in the variables $(y,z)$. 

The lemma below explains how the IPC property is related to polynomials.
\begin{lemma}\label{lemma:poly-and-propx}
Let $q, r$ be distributions over $\{0,1\}^m$. Define the polynomials $q(z) := \sum_{{\sigma}\in \{0,1\}^m} q_{\sigma} z^{\sigma}$ and $r(y) := \sum_{\tau \in \{0,1\}^m} r_\tau  y^\tau $. Further, for every $k=0,1, \ldots, m$ let  
$$
f_k(z_{k+1}, \ldots, z_m, y_{k+1}, \ldots, y_m) := 
\Phi_{z_{k}, y_{k}} \cdots \Phi_{z_2, y_2} \Phi_{z_1, y_1} \insquare{q(z) \cdot r(y)}
$$
For any number $k=1,2, \ldots, m+1$ and for any two sequences $a,b\in \R_{\geq 0}^{m-k}$ of  non-negative numbers, the polynomial $f_{k-1}(z_k, a_{k+1}, \ldots, a_{m}, y_k, b_{k+1}, \ldots, b_m)$ is of the form
$$h(z_k, y_k) = h_{00}+h_{10}z_k + h_{01} y_k + h_{11}z_ky_k,$$
where (up to scaling) $h_{cd} = E_k(c,d)$ for every $c,d\in \{0,1\}$ (as in Definition~\ref{def:propx} with $a=s$ and $b=t$). 
\end{lemma}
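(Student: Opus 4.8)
The plan is to translate the operator calculus into probability and then read off coefficients. First I would observe that $q(z)r(y) = \mathbb{E}_{X,Y}[z^X y^Y]$ for independent $X\sim q$, $Y\sim r$, and that since $q,r$ are supported on $\{0,1\}^m$ this polynomial is multilinear in every variable. Each $\Phi_{z_i,y_i}$ preserves multilinearity (the mixed derivative reduces a degree-one variable to degree zero, and the evaluation at $0$ deletes variables), so $f_{k-1}$ is multilinear in $z_k,\ldots,z_m,y_k,\ldots,y_m$. Consequently, after substituting the constants $a,b$ for all variables other than $z_k,y_k$, the result is automatically of the stated form $h_{00}+h_{10}z_k+h_{01}y_k+h_{11}z_ky_k$; the entire content of the lemma is the identification of the coefficients.

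The crux is to determine the action of a single $\Phi_{z_i,y_i}$. Since it is linear it suffices to evaluate it on a monomial $z^\sigma y^\tau$, and I would run the four-case analysis on $(\sigma_i,\tau_i)$. When $\sigma_i=\tau_i=0$ the derivative term vanishes and the evaluation leaves the ($z_i,y_i$-free) monomial intact; when $\sigma_i=\tau_i=1$ the undifferentiated term is killed by $z_i=y_i=0$ while $\partial_{z_i}\partial_{y_i}$ strips off the factor $z_iy_i$, so the monomial survives with coefficient $1$; and when $\sigma_i\neq\tau_i$ the mixed derivative is identically zero and the surviving lone variable is annihilated by the evaluation. Hence $\Phi_{z_i,y_i}$ is the diagonal projection $z^\sigma y^\tau \mapsto z^{\sigma'}y^{\tau'}$ (coordinate $i$ deleted) when $\sigma_i=\tau_i$, and $\mapsto 0$ otherwise — equivalently, at the level of expectations it performs $\mathbb{E}[\,\cdot\,]\mapsto \mathbb{E}[\,\cdot\,\mathbbm{1}_{X_i=Y_i}]$ and then marginalizes out coordinate $i$.

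Iterating this description over $i=1,\ldots,k-1$ (a one-line induction, legitimate because the operators act on disjoint variable pairs and therefore commute) gives the closed form
$$f_{k-1}=\mathbb{E}_{X,Y}\insquare{\mathbbm{1}_{EQ_{k-1}}\prod_{j=k}^m z_j^{X_j}y_j^{Y_j}},$$
where $EQ_{k-1}$ is the event $X_j=Y_j$ for $j<k$. Setting $z_j=a_j=s_j$ and $y_j=b_j=t_j$ for $j=k+1,\ldots,m$ and extracting the coefficient of $z_k^c y_k^d$ yields
$$h_{cd}=\mathbb{E}\insquare{\mathbbm{1}_{EQ_{k-1}}\,\mathbbm{1}_{X_k=c}\,\mathbbm{1}_{Y_k=d}\prod_{j=k+1}^m s_j^{X_j}t_j^{Y_j}}=\pr[EQ_{k-1}]\cdot E_k(c,d),$$
the last step being just the definition of the conditional expectation $E_k(c,d)$. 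Since $\pr[EQ_{k-1}]>0$ is a single factor common to all four $(c,d)$, this is exactly the claimed identity up to scaling.

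The main (indeed only) non-routine step is the per-operator case analysis establishing that $\Phi_{z_i,y_i}$ acts as a clean diagonal projection; once this is secured, the induction and the coefficient extraction are mechanical. The two small points I would take care to record are the commutativity of the operators (so the order of application is immaterial) and the fact that $\pr[EQ_{k-1}]\neq 0$, which follows from the standing assumption $\pr[EQ_m]\neq 0$ and guarantees that the conditioning defining $E_k$ is well posed.
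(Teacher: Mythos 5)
Your proposal is correct and follows essentially the same route as the paper's proof: linearity of the $\Phi_{z_i,y_i}$ operators reduces everything to a monomial-by-monomial computation showing that the iterated operator acts as the diagonal projection onto $\sigma_i=\tau_i$ for $i<k$, after which the coefficients of $z_k^c y_k^d$ are read off and identified probabilistically with $\pr[EQ_{k-1}]\cdot E_k(c,d)$. The only differences are presentational — you make explicit the single-operator four-case analysis, the commutativity of the operators, and the non-vanishing of $\pr[EQ_{k-1}]$, all of which the paper leaves implicit.
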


\begin{proof}
We start by providing explicit formulas for the coefficients of $f_{k-1}$. Note first that all the operators $\Phi_{z_i, y_i}$ are linear. Hence it is enough to consider only one monomial $\prod_{i=1}^m z_i^{{\sigma}_i} \prod_{i=1}^m y_i^{\tau _i}$, for ${\sigma},\tau \in \{0,1\}^m$.
$$
\Phi_{z_{k-1}, y_{k-1}} \cdots \Phi_{z_2, y_2} \Phi_{z_1, y_1} \inparen{\prod_{i=1}^m z_i^{{\sigma}_i} \prod_{i=1}^m y_i^{\tau _i}} = 
\begin{cases}
\prod_{i=k+1}^m z_i^{{\sigma}_i} \prod_{i=k+1}^m y_i^{\tau _i} &\mbox{if }{\sigma}_1=\tau _1,\ldots  {\sigma}_k=\tau _k,\\
0 & \mbox{otherwise.}
\end{cases}$$
For this reason, the coefficient of  $\prod_{i=k}^m z_i^{{\sigma}_i} \prod_{i=k}^m y_i^{\tau _i}$ in $f_{k-1}$ is equal to
$$\sum_{u\in \{0,1\}^{k-1}} q_{u\widetilde{{\sigma}}} \cdot r_{u\widetilde{\tau }}.$$
where $\widetilde{{\sigma}}=({\sigma}_{k}, {\sigma}_{k+1}, \ldots, {\sigma}_m)$ and $\widetilde{\tau } = (\tau _{k}, \tau _{k+1}, \ldots, \tau _m)$. In the language of probability this coefficient is equal to the probability that 
\begin{align*}
X_{i}&=Y_{i},~~~~\mbox{ for } i=1,2, \ldots, k-1,\\
X_i &= {\sigma}_i ,~~~~\mbox{ for } i=k, k+1, \ldots, m,\\
Y_i &= \tau _i , ~~~~\mbox{ for } i=k,k+1, \ldots, m.
\end{align*}  when $X$ and $Y$ are distributed according to $q$ and $r$ respectively. Thus, when we consider $h_k(z_k,y_k)=f_{k-1}(z_k, a, y_k, b)$ for some $a,b \in \R_{\geq 0}^{m-k}$, the corresponding coefficients $h_{cd}$ are given by sums of the form
$$\sum_{u\in \{0,1\}^{k-1}} \sum_{\widetilde{{\sigma}} \in \{0,1\}^{m-k}} \sum_{\widetilde{\tau } \in \{0,1\}^{m-k}} q_{uc\widetilde{{\sigma}}}\cdot r_{ud\widetilde{w}}\cdot a^{\widetilde{{\sigma}}} \cdot b^{\widetilde{\tau }}.$$
Again, probabilistically this corresponds to 
 $$ \mathbb{E} \insquare{\prod_{j=k+1}^m a_j^{X_j} b_j^{Y_j} \cdot \mathbbm{1}_{X_k=c}\cdot \mathbbm{1}_{Y_k=d} \cdot \mathbbm{1}_{EQ_{k-1}}},$$
 and the lemma follows.
\end{proof}

\begin{lemma}[\cite{AO17}]\label{lem:one_dim}
Suppose $h(x,y) = h_{00}+h_{10}x+h_{01}y+h_{11}xy$ is a bivariate multi-linear polynomial such that $h_{ij}\geq 0$ for all $i,j\in \{0,1\}$ and $h_{10}\cdot h_{01}\leq  h_{00} \cdot h_{11}$, then for every $\beta\in \R_{\geq 0}$
$$\inf_{x,y>0} \frac{h(x,y)}{x^{\alpha} y^{\alpha}} \alpha^\alpha (1-\alpha)^{1-\alpha}\leq h_{00}+h_{11}.$$
\end{lemma}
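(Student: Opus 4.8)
The plan is to upper bound the infimum by exhibiting its value on a single, well-chosen family of polynomials, reducing the general case to a factorable one. Since the infimum over $x,y>0$ of $h(x,y)/(x^\alpha y^\alpha)$ is at most its value at any particular point, and since for $x,y>0$ the quantity $h(x,y)$ is monotonically non-decreasing in each of the coefficients $h_{10}$ and $h_{01}$, the first observation is that the infimum itself is non-decreasing in $h_{10}$ and in $h_{01}$. (Here and throughout I treat the exponent as $\alpha \in [0,1]$, as the statement intends.) Consequently, given $h$ with $h_{10}h_{01}\le h_{00}h_{11}$ and $h_{00},h_{11}>0$, I would replace $(h_{10},h_{01})$ by $(\lambda h_{10},\lambda h_{01})$ with $\lambda=\sqrt{h_{00}h_{11}/(h_{10}h_{01})}\ge 1$, obtaining a polynomial $h'\ge h$ pointwise on the positive orthant, with $\inf_{x,y>0} h'/(x^\alpha y^\alpha)\ge \inf_{x,y>0} h/(x^\alpha y^\alpha)$ and $h'_{10}h'_{01}=h_{00}h_{11}$. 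It therefore suffices to prove the bound in the equality case.

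In the equality case $h_{10}h_{01}=h_{00}h_{11}$ the polynomial factors as $h(x,y)=h_{00}(1+ax)(1+by)$ with $a=h_{10}/h_{00}$ and $b=h_{01}/h_{00}$; note that $ab=h_{11}/h_{00}$. The objective then separates,
$$\frac{h(x,y)}{x^\alpha y^\alpha}=h_{00}\cdot\frac{1+ax}{x^\alpha}\cdot\frac{1+by}{y^\alpha},$$
so the two-variable infimum factors into a product of two independent one-variable minimizations. Each is a routine calculus exercise: $\inf_{x>0}(1+ax)x^{-\alpha}=a^\alpha(1-\alpha)^{\alpha-1}\alpha^{-\alpha}$, attained at $ax=\alpha/(1-\alpha)$. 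Multiplying the two factors, substituting $ab=h_{11}/h_{00}$, and multiplying by the normalization $\alpha^\alpha(1-\alpha)^{1-\alpha}$ collapses everything to
$$\inf_{x,y>0}\frac{h(x,y)}{x^\alpha y^\alpha}\cdot\alpha^\alpha(1-\alpha)^{1-\alpha}=\frac{h_{00}^{1-\alpha}h_{11}^{\alpha}}{\alpha^\alpha(1-\alpha)^{1-\alpha}}.$$

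The final step is the scalar inequality $h_{00}^{1-\alpha}h_{11}^{\alpha}\le (h_{00}+h_{11})\,\alpha^\alpha(1-\alpha)^{1-\alpha}$. Writing $p=h_{00}/(h_{00}+h_{11})$, this is equivalent to $p^{1-\alpha}(1-p)^{\alpha}\le\alpha^\alpha(1-\alpha)^{1-\alpha}$, and the left-hand side, as a function of $p\in[0,1]$, is maximized at $p=1-\alpha$, where it equals exactly the right-hand side; hence the inequality holds (with equality precisely when $h_{00}/(h_{00}+h_{11})=1-\alpha$, which is why the overall bound is tight). Combining the three steps yields $\inf_{x,y>0}\bigl(h(x,y)/x^\alpha y^\alpha\bigr)\,\alpha^\alpha(1-\alpha)^{1-\alpha}\le h_{00}+h_{11}$.

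I expect the main subtlety to be the reduction step rather than any single computation: one must verify that the hypothesis $h_{10}h_{01}\le h_{00}h_{11}$ (and not the reverse) is exactly what permits pushing the coefficients up to the factorable boundary without decreasing the infimum, and one must dispose of the boundary cases $h_{00}=0$, $h_{11}=0$, $h_{10}h_{01}=0$, and $\alpha\in\{0,1\}$ separately (in each of these the objective either drives to $0$ or reduces to a one-variable problem, and the claimed bound follows directly). The three displayed computations are then routine, and the crux is really the observation that increasing $h_{10}$ and $h_{01}$ only helps, which converts a genuinely two-dimensional optimization into a product of one-dimensional ones.
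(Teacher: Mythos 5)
Your proof is correct and follows essentially the same route as the paper's: reduce to the boundary case $h_{10}h_{01}=h_{00}h_{11}$ by noting the infimum is monotone in $h_{10},h_{01}$, factor the polynomial into $(1+ax)(1+by)$ form so the optimization separates into two one-variable problems, and finish with the scalar inequality $p^{1-\alpha}(1-p)^{\alpha}\le\alpha^{\alpha}(1-\alpha)^{1-\alpha}$, which is exactly the nonnegativity-of-KL step the paper invokes, just computed by hand instead of via the duality lemma. The only cosmetic differences are that you scale both off-diagonal coefficients while the paper raises only $h_{10}$, and that you carry out the one-dimensional minimizations by calculus rather than citing Lemma~\ref{lemma:entro_poly}.
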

\begin{proof}
Fix any $\alpha \geq 0$. It is not hard to prove that for $\alpha>1$, the left hand side of the inequality is actually $0$, hence we can focus on $\alpha \in [0,1]$. Note also that we can assume that $h_{10}\cdot h_{10} = h_{00} \cdot h_{11}$ since if $h_{10} \cdot h_{10} < h_{00} \cdot h_{11}$, we can keep increasing $h_{10}$ until the inequality becomes an equality, this way we might only increase the value of 
$$\inf_{x,y>0} \frac{h(x,y)}{x^{\alpha} y^{\alpha}}$$
but $h_{00} + h_{11}$ stays the same. From $h_{10}\cdot h_{10} = h_{00} \cdot h_{11}$ it then follows that that
$$h(x) = (a_0 + a_1x)(b_0+b_1x)$$
for some $a_0, a_1, b_0, b_1 \geq 0$. Now using Lemma~\ref{lemma:entro_poly} we obtain
\begin{align*}
\inf_{x>0} \frac{a_0+a_1x}{x^\alpha} &= \exp(\kl(a,\wt{\alpha})),\\
\inf_{y>0} \frac{b_0+b_1y}{y^\alpha} &= \exp(\kl(b,\wt{\alpha})).
\end{align*}
Where $a=(a_0, a_1)$, $b=(b_0,b_1)$ and $\wt{\alpha}=(\alpha, 1-\alpha)$.
Therefore
$$\inf_{x,y>0} \frac{h(x,y)}{x^{\alpha} y^{\alpha}} \alpha^\alpha (1-\alpha)^{1-\alpha} = \exp(\kl(ab,\wt{\alpha})).$$
Where $ab=(a_0b_0, a_1b_1)$. What then remains to prove is that 
$$\kl(ab,\wt{\alpha}) \leq \log (a_0 b_0+a_1b_1).$$
However, this follows from the fact that the KL-divergence between two probability distributions $p,q\in \Delta_2$ is nonnegative, when applied to: $(p_1, p_2) = (\alpha, 1-\alpha)$ and $(q_1, q_2) = \inparen{\frac{a_0b_0}{a_0b_0+a_1b_1}, \frac{a_1b_1}{a_0b_0+a_1b_1}}$.
\end{proof}

\begin{lemma}\label{lemma:technical}
Let $q,r$ be distributions over $\{0,1\}^m$ satisfying the IPC property. Then
$$ \sup_{\beta \in [0,1]^m} \insquare{  \beta^\beta (1-\beta)^{1-\beta} \inf_{y,z>0}\frac{q(z)}{z^\beta} \cdot \frac{r(y)}{y^\beta}} \leq \sum_{{\sigma}} q_{\sigma} r_{\sigma}.$$
\end{lemma}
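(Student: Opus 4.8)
The plan is to peel off the $m$ pairs of variables $(z_k,y_k)$ one at a time, reducing the left-hand side to the fully contracted constant through a telescoping chain of inequalities. Let $f_0, f_1, \ldots, f_m$ be the polynomials from Lemma~\ref{lemma:poly-and-propx}, so that $f_0 = q(z) r(y)$, $f_k = \Phi_{z_k,y_k}(f_{k-1})$, and $f_m$ is the constant $\sum_\sigma q_\sigma r_\sigma$ (contracting all pairs keeps exactly the diagonal terms $q_\sigma r_\sigma$). For $j = 0, 1, \ldots, m$ I would define
$$V_j := \sup_{\beta_{j+1}, \ldots, \beta_m \in [0,1]} \prod_{i=j+1}^m \beta_i^{\beta_i}(1-\beta_i)^{1-\beta_i} \cdot \inf_{z_{j+1}, \ldots, z_m, y_{j+1}, \ldots, y_m > 0} \frac{f_j}{\prod_{i=j+1}^m (z_i y_i)^{\beta_i}},$$
so that $V_0$ is exactly the left-hand side of the lemma (using $\frac{q(z)r(y)}{\prod_i (z_iy_i)^{\beta_i}} = \frac{q(z)}{z^\beta}\cdot\frac{r(y)}{y^\beta}$) while $V_m = f_m = \sum_\sigma q_\sigma r_\sigma$ is the right-hand side. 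The whole statement then reduces to the monotonicity chain $V_0 \leq V_1 \leq \cdots \leq V_m$.

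The heart of the argument is the inductive step $V_{k-1} \leq V_k$. I would fix the remaining exponents $\hat\beta = (\beta_{k+1}, \ldots, \beta_m)$ and positive values $\hat z = (z_{k+1}, \ldots, z_m)$, $\hat y = (y_{k+1}, \ldots, y_m)$. By Lemma~\ref{lemma:poly-and-propx}, freezing these turns $f_{k-1}$ into a bivariate multi-affine polynomial $h(z_k, y_k) = h_{00} + h_{10} z_k + h_{01} y_k + h_{11} z_k y_k$ whose coefficients equal $E_k(c,d)$ with $s = \hat z$, $t = \hat y$; these are nonnegative, and the IPC property applied with this choice of $s,t$ delivers precisely $h_{10} h_{01} \leq h_{00} h_{11}$. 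A direct computation shows $\Phi_{z_k,y_k}(h) = h_{00} + h_{11}$, which by definition equals $f_k(\hat z, \hat y)$. Hence Lemma~\ref{lem:one_dim}, applied for each $\beta_k \in [0,1]$, yields
$$\beta_k^{\beta_k}(1-\beta_k)^{1-\beta_k} \inf_{z_k, y_k > 0} \frac{f_{k-1}}{(z_k y_k)^{\beta_k}} \leq f_k(\hat z, \hat y).$$

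To finish, I would propagate this pointwise bound through the surrounding optimization. Dividing by the positive quantity $\prod_{i>k}(z_iy_i)^{\beta_i}$ and taking the infimum over $\hat z, \hat y > 0$ on both sides (using that a joint infimum equals the iterated one and that $\inf$ is monotone), then multiplying by the positive constant $\prod_{i>k}\beta_i^{\beta_i}(1-\beta_i)^{1-\beta_i}$, produces an inequality whose left side is the bracketed quantity defining $V_{k-1}$ at exponents $(\beta_k, \hat\beta)$ and whose right side is the bracketed quantity defining $V_k$ at exponents $\hat\beta$ — crucially independent of $\beta_k$. Taking the supremum over $\beta_k$ and $\hat\beta$ on both sides (the $\sup$ over $\beta_k$ being vacuous on the right) gives $V_{k-1} \leq V_k$, completing the induction.

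The main obstacle I anticipate is bookkeeping rather than a deep idea: one must carefully track the order of the $\inf$ and $\sup$ operators and justify that the pointwise bound from Lemma~\ref{lem:one_dim} survives being embedded inside the joint optimization over the remaining $2(m-k)$ variables. A secondary subtlety is matching hypotheses exactly — checking that the coefficients produced by Lemma~\ref{lemma:poly-and-propx} are the $E_k(c,d)$ for the specific scalars $s=\hat z$, $t=\hat y$, so that the universally quantified IPC inequality yields exactly the condition $h_{10}h_{01} \le h_{00}h_{11}$ needed by Lemma~\ref{lem:one_dim}, and observing that $\Phi_{z_k,y_k}$ collapses $h$ to precisely $h_{00}+h_{11}$, which is what makes the telescoping close.
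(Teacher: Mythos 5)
Your proof is correct and follows essentially the same route as the paper: the same telescoping chain $f_k = \Phi_{z_k,y_k}(f_{k-1})$, the same identification of the frozen bivariate coefficients with $E_k(c,d)$ via Lemma~\ref{lemma:poly-and-propx} so that IPC yields $h_{10}h_{01}\le h_{00}h_{11}$, and the same one-step contraction via Lemma~\ref{lem:one_dim} using $\Phi_{z_k,y_k}(h)=h_{00}+h_{11}$. The only differences are cosmetic — you carry the supremum over the $\beta_i$'s inside your quantities $V_j$ and pass the pointwise bound through the infima directly, whereas the paper fixes $\beta$ once and uses an $\varepsilon$-near-optimal choice of $(a,b)$; both are valid bookkeeping for the same induction.
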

\begin{proof}
We proceed by induction.  Observe first that $ \sum_{{\sigma}\in \{0,1\}^m} q_{\sigma} r_{\sigma}$ can be obtained from $q(z) \cdot r(y)$ by applying a sequence of differential operators. More precisely, let 
\begin{align*}
g_0(z_1, \ldots, z_m,y_1, \ldots, y_m) &:=q(z_1, \ldots, z_m) r(y_1, \ldots, y_m),\\
g_k(z_{k+1}, \ldots, z_m,y_{k+1}, \ldots, y_m) & :=\Phi_{z_k,y_k} \inparen{g_{k-1}(z_k, \ldots, z_m,y_k, \ldots, y_m) }.
\end{align*}

\noindent 
Note that $g_m$ is a constant polynomial given by \begin{equation}\label{eq:deriv}
g_m= \sum_{{\sigma}\in \{0,1\}^m} q_{\sigma} r_{\sigma}.
\end{equation}
Let us fix $\beta \in [0,1]^m$, we prove that for every $k=0,1, \ldots, m$
\begin{equation}\label{eq:ind_hyp}
\prod_{j=k+1}^m \beta_j^{\beta_j} (1-\beta_j)^{1-\beta_j} \inf_{\wt{z},\wt{y}>0} \frac{g_k(\wt{z},\wt{y})}{\prod_{j=k+1}^m z_j^{\beta_j} y_j^{\beta_j}}
\leq \sum_{{\sigma}\in \{0,1\}^m} q_{\sigma} r_{\sigma}
\end{equation}
Where $\wt{y}=(y_{k+1}, \ldots, y_m)$ and $\wt{z}=(z_{k+1}, \ldots, z_m)$ 
Note that for $k=0$ we obtain the lemma. We proceed by induction starting from the base case $k=m$ and go backwards with $k=m-1, \ldots, 1, 0$. The base case follows directly (with equality) from~\eqref{eq:deriv}. Suppose now that $k\in \{1, \ldots, m\}$ and~\eqref{eq:ind_hyp} has been proved for all $k'$ with $k'\geq k$, we prove it for $k-1$. 
\end{proof}
Let us fix $\eps>0$ and values $a_{k+1}, \ldots, a_m, b_{k+1}, b_m >0$ such that 
$$\prod_{j=k+1}^m \beta_j^{\beta_j} (1-\beta_j)^{1-\beta_j}   \frac{g_k(a,b)}{\prod_{j=k+1}^m a_j^{\beta_j} b_j^{\beta_j}}
\leq \eps+\sum_{{\sigma}\in \{0,1\}^m} q_{\sigma} r_{\sigma}.$$
It remains to show that
\begin{equation}\label{eq:to_show}
\inf_{z_k,y_k>0} \frac{g_{k-1}(z_k, a, y_k,b)}{z_k^{\beta_k} y_k^{\beta_k}} \leq g_k(a,b).
\end{equation}
Towards this, write $g_{k-1}(z_k, a, y_k, b)$ as a polynomial in $z_k,y_k$
$$
g_{k-1}(z_k, a, y_k, b) = g_{k-1}^{00}(a,b) +g_{k-1}^{10}(a,b)  z_k + g_{k-1}^{10}(a,b) y_k + g_{k-1}^{11}(a,b)  z_k y_k,
$$
and note that~\eqref{eq:to_show} follows from Lemma~\ref{lem:one_dim} if only we can justify its assumption, that is
$$g_{k-1}^{10}(a,b) \cdot g_{k-1}^{01}(a,b) \leq g_{k-1}^{00}(a,b)\cdot g_{k-1}^{11}(a,b).$$
The above follows from the IPC property because from Lemma~\ref{lemma:poly-and-propx} we have
$$E_{k-1}(a,b;c,d) \propto g_{k-1}^{cd}(a,b)$$
for every $c,d\in \{0,1\}$.

\begin{proofof}{of Lemma~\ref{lemma:main}}
From Theorem~\ref{theorem:poly} the Bethe approximation can be stated in the form of a polynomial optimization problem
$$Z_B(G) = \max_{\beta } \insquare{ \prod_{e\in E} \beta_e^{\beta_e} (1-\beta_e)^{1-\beta_e} \inf_{u>0} \prod_{a\in F} \frac{h_a(x_a)}{x_a^{\beta_a}} }.$$
Let $L\cup R=F$ be the bipartition of the set of factor nodes, i.e., there is no edge $e\in E$ within $L$ or $R$. In other words, every edge $e$ has one endpoint $a^L_{e}\in L$ and one endpoint $a^R_{e}\in R$, or in other words $e= \{a^{L}_e, a^R_e\}$. 

Let us split the product $\prod_{a\in F} \frac{h_a(x_a)}{x_a^{\beta_a}}$ into two parts
\begin{equation}\label{eq:prod}
\prod_{a\in F} \frac{h_a(x_a)}{x_a^{\beta_a}}=\prod_{a\in L}\frac{h_a(x_a)}{x_a^{\beta_a}} \cdot \prod_{a\in R}\frac{h_a(x_a)}{x_a^{\beta_a}},
\end{equation}
corresponding to the bipartition. Let us now rename the variables in the above. For an edge $e\in E$ and $a=a_e^L$ we rename the variable $x_{a,e}$ to $z_e$. Similarly, if $a=a_e^R$ we rename $x_{a,e}$ to $y_e$. Because the factor graph $G$ is bipartite, the product~\eqref{eq:prod} can be then rewritten as
$$\prod_{a\in L} \frac{h_a(z_a)}{z_a^{\beta_a}}\cdot \prod_{a\in R} \frac{h_a(y_a)}{y_a^{\beta_a}}.$$
In the above $z_a=\{z_e\}_{e\in \nhd{a}}$, similarly for $y_a$. Let us now define two polynomials $q,r$ as follows
\begin{align*}
q(z) &= \prod_{a\in L} q_a(z_a),\\
r(y) &= \prod_{a\in R} q_a(y_a).
\end{align*}
The expression~\eqref{eq:prod} can be then further simplified to
$$\frac{q(z)}{z^\beta} \cdot \frac{r(y)}{y^\beta}.$$
Consequently, we arrive at the following form of the Bethe partition function
$$Z_B(G) = \max_{\beta \in [0,1]^m} \insquare{  \beta^\beta (1-\beta)^{1-\beta} \inf_{y,z>0}\frac{q(z)}{z^\beta} \cdot \frac{r(y)}{y^\beta}}.$$
Since by the assumption, the corresponding distributions $(q,r)$ satisfy the IPC property, Lemma~\ref{lemma:technical} implies that 
$$\max_{\beta} \insquare{  \beta^\beta (1-\beta)^{1-\beta} \inf_{y,z>0}\frac{q(z)}{z^\beta} \cdot \frac{r(y)}{y^\beta}} \leq \sum_{{\sigma}\in \{0,1\}^E} q_{\sigma} r_{\sigma}.$$
It remains to observe that $\sum_{{\sigma}\in \{0,1\}^E} q_{\sigma} r_{\sigma}=Z(G)$. To prove it, let us first interpret what the coefficients $q_{\sigma}, r_{\sigma}$ mean in terms of the underlying factor graph.  It is not hard to see that
$$q_{\sigma} = \prod_{a\in L} g_a({\sigma}_{\nhd{a}}),~~~~~~~~r_{\sigma} =\prod_{a\in R} g_a({\sigma}_{\nhd{a}}).$$
Hence
$$\sum_{{\sigma}\in \{0,1\}^E} q_{\sigma} r_{\sigma}= Z(G),$$
which concludes the proof.
\end{proofof}

\begin{remark}\label{rem:ao_sv}
The relaxation  $$\sup_{\beta \in [0,1]^m} \insquare{  \beta^\beta (1-\beta)^{1-\beta} \inf_{y,z>0}\frac{q(z)}{z^\beta} \cdot \frac{r(y)}{y^\beta}}$$ was recently studied in~\cite{AO17} as a way to approximate the inner product $\inangle{q,r}:= \sum_{\sigma}q_{\sigma}r_{\sigma}$. 
The proof of Lemma~\ref{lemma:technical} borrows from ideas developed in this paper.
Interestingly, it follows from the proof of Lemma~\ref{lemma:main} that the relaxation studied in~\cite{AO17} arises as a Bethe approximation of a certain factor graph. To see this, consider a NFG $G$ with only two factors $Q$ and $R$ and $m$ edges between them. The local functions are defined as $g_Q(\sigma) = q_\sigma$ and $g_R(\sigma)=r_\sigma$.

In \cite{SV17}, a similar relaxation $$\sup_{\beta \in [0,1]^m} \inf_{y,z>0} \frac{q(y)r(z)}{y^\beta z^\beta}$$ is considered\footnote{The problem considered in~\cite{SV17} is in fact slightly different: computing $\sum_{\sigma \in \cB} q_\sigma$ for a given family $\cB \subseteq \{0,1\}^m$. The relaxation under discussion is a simple variant of it for computing $\sum_{\sigma} q_\sigma r_\sigma$.}.  When compared to Bethe approximation, this relaxation does not include the negative entropy term $H(\beta)$. This corresponds to a slightly different heuristics for simplifying~\eqref{eq:entropy_prog}. 
\end{remark}

\subsection{Proof of Lemma~\ref{lemma:rs-x}}\label{ssec:proof_rs}

\begin{proof}
Consider the polynomials $q(z)$ and $p(y)$ as constructed in the proof of Lemma~\ref{lemma:main}. When written in the form
$$q(z)=\sum_{{\sigma}\in \{0,1\}^m} q_{\sigma} z^{\sigma}, ~~~~~~~ r(y)=\sum_{{\sigma}\in \{0,1\}^m} r_{\sigma} y^{\sigma},$$
the coefficients satisfy, for every ${\sigma}\in \{0,1\}^m$
$$q_{\sigma} = \prod_{a\in L} g_a({\sigma}), ~~~~~\mbox{and}~~~~~r_{\sigma} = \prod_{a\in R} g_a({\sigma}).$$
In other words $q_{\sigma} \propto p_{\sigma}^L$ and $r_{\sigma} \propto p_{\sigma}^R$. 

Note that $f(z,y):= q(z) \cdot p(-y)$ is a  real stable polynomial, since $q(z)$ and $p(y)$ are real stable as products of real stable polynomials (see~\cite{Vishnoi-Survey}).

As observed by~\cite{AO17}, if for a multi-affine, real polynomial $h(z,y)$ (for $z=(z_1, \ldots, z_m)$ and $y=(y_1, \ldots, y_m)$), $h(z,-y)$ is real stable then $\widetilde{h}(\tilde{z}, - \widetilde{y})$ is real stable as well, where $\widetilde{y}=(y_2, \ldots, y_m)$, $\widetilde{z} = (z_2, \ldots, z_m)$ and $\widetilde{h}(\widetilde{z},\widetilde{y}) = \Phi_{z_1, y_1}( h)$.

Define a sequence of polynomials $f_0, f_1, \ldots, f_m$ by setting: $f_0 = f(z,y)$ and for $k=1,2, \ldots, m$ 
$$f_k(z_{k+1}, \ldots, z_m, y_{k+1}, \ldots, y_m) := \Phi_{z_k,y_k}( f_{k-1}),$$
By the above stated observation, $f_k(z,-y)$ is real stable, for every $k=1,2, \ldots, m$. 

We will deduce the IPC property from real stability of $f_0, f_1, \ldots, f_m$. Indeed, by Lemma~\ref{lemma:poly-and-propx} we know that for every $k\in \{1,2, \ldots, m\}$, $a,b\in \R^{m-k}_{\geq 0}$ and $c,d\in \{0,1\}$, $E_k(a,b;c,d)$ can be expressed as the appropriate coefficient of the polynomial
$$
h_{k-1}(z_k,y_k) = f_{k-1}(z_k, a,y_k,b)=h_{00} + h_{10} z_k +h_{01}y_k + h_{11}z_ky_k.
$$
Since $f_k(z,-y)$ is real stable, it follows, that $h_{k-1}(z_k,-y_k)$ is real stable. Indeed, this is a consequence of the fact that plugging in real constants into a real stable polynomial preserves real stability (see~\cite{Vishnoi-Survey}). Using a characterization of multilinear real stable polynomials by~\cite{Branden07}, the real stability of $h_{k-1}$ is equivalent to:
$$h_{10} \cdot h_{01} \leq h_{00} \cdot h_{11},$$
hence the IPC property holds.
\end{proof}

\bibliographystyle{abbrv}
\bibliography{references}

\begin{thebibliography}{10}

\bibitem{AD78}
R.~Ahlswede and D.~E. Daykin.
\newblock An inequality for the weights of two families of sets, their unions
  and intersections.
\newblock {\em Zeitschrift f{\"u}r Wahrscheinlichkeitstheorie und Verwandte
  Gebiete}, 43(3):183--185, Sep 1978.

\bibitem{AO15}
N.~Anari and S.~Oveis~Gharan.
\newblock Effective-resistance-reducing flows, spectrally thin trees, and
  asymmetric {TSP}.
\newblock In {\em {IEEE} 56th Annual Symposium on Foundations of Computer
  Science, {FOCS} 2015, Berkeley, CA, USA, 17-20 October, 2015}, pages 20--39,
  2015.

\bibitem{AO17}
N.~Anari and S.~Oveis~Gharan.
\newblock A generalization of permanent inequalities and applications in
  counting and optimization.
\newblock In {\em Proceedings of the 49th Annual ACM SIGACT Symposium on Theory
  of Computing}, STOC 2017, pages 384--396, 2017.

\bibitem{Bethe35}
H.~A. Bethe.
\newblock Statistical theory of superlattices.
\newblock {\em Proceedings of the Royal Society of London. Series A,
  Mathematical and Physical Sciences}, 150(871):552--575, 1935.

\bibitem{BB09}
J.~Borcea and P.~Br{\"a}nd{\'e}n.
\newblock {The Lee-Yang and P{\'o}lya-Schur programs. I. linear operators
  preserving stability}.
\newblock {\em Inventiones mathematicae}, 177(3):541--569, 2009.

\bibitem{BB09b}
J.~Borcea and P.~Br{\"a}nd{\'e}n.
\newblock {The Lee-Yang and P{\'o}lya-Schur programs. II. Theory of stable
  polynomials and applications}.
\newblock {\em Communications on Pure and Applied Mathematics},
  62(12):1595--1631, 2009.

\bibitem{BBL09}
J.~Borcea, P.~Br{\"a}nd{\'e}n, and T.~Liggett.
\newblock {Negative dependence and the geometry of polynomials}.
\newblock {\em J. of the American Mathematical Society}, 22(2), 2009.

\bibitem{Branden07}
P.~Br{\"a}nd{\'e}n.
\newblock Polynomials with the half-plane property and matroid theory.
\newblock {\em Advances in Mathematics}, 216(1):302--320, 2007.

\bibitem{CC06}
M.~Chertkov and V.~Y. Chernyak.
\newblock Loop calculus in statistical physics and information science.
\newblock {\em Physical Review E}, 73(6):065102, 2006.

\bibitem{DM10}
A.~Dembo and A.~Montanari.
\newblock {Ising Models on Locally Tree-like Graphs}.
\newblock {\em The Annals of Applied Probability}, 20(2):565--592, 2010.

\bibitem{FV11}
G.~D. Forney~Jr. and P.~O. Vontobel.
\newblock Partition functions of normal factor graphs.
\newblock In {\em Information Theory and Applications Workshop, UC San Diego,
  La Jolla, CA, USA, Feb. 6-11, 2011}, 2011.

\bibitem{FPC00}
W.~T. Freeman, E.~C. Pasztor, and O.~T. Carmichael.
\newblock Learning low-level vision.
\newblock {\em International Journal of Computer Vision}, 40(1):25--47, 2000.

\bibitem{Gallager62}
R.~G. Gallager.
\newblock Low-density parity-check codes.
\newblock {\em {IRE} Trans. Information Theory}, 8(1):21--28, 1962.

\bibitem{Georgii11}
H.-O. Georgii.
\newblock {\em Gibbs measures and phase transitions}, volume~9.
\newblock Walter de Gruyter, 2011.

\bibitem{Gurvits06}
L.~Gurvits.
\newblock {Hyperbolic polynomials approach to Van der Waerden/Schrijver-Valiant
  like conjectures: sharper bounds, simpler proofs and algorithmic
  applications}.
\newblock In {\em Proceedings of the thirty-eighth annual ACM symposium on
  Theory of computing}, pages 417--426. ACM, 2006.

\bibitem{Gurvits11}
L.~Gurvits.
\newblock {Unleashing the power of Schrijver's permanental inequality with the
  help of the Bethe Approximation}.
\newblock {\em Electronic Colloquium on Computational Complexity {(ECCC)}},
  18:169, 2011.

\bibitem{GS14}
L.~Gurvits and A.~Samorodnitsky.
\newblock Bounds on the permanent and some applications.
\newblock In {\em 55th {IEEE} Annual Symposium on Foundations of Computer
  Science, {FOCS} 2014, Philadelphia, PA, USA, October 18-21, 2014}, pages
  90--99, 2014.

\bibitem{KFL01}
F.~R. Kschischang, B.~J. Frey, and H.~Loeliger.
\newblock Factor graphs and the sum-product algorithm.
\newblock {\em {IEEE} Trans. Information Theory}, 47(2):498--519, 2001.

\bibitem{KKW53}
M.~Kurata, R.~Kikuchi, and T.~Watari.
\newblock A theory of cooperative phenomena. iii. detailed discussions of the
  cluster variation method.
\newblock {\em The Journal of Chemical Physics}, 21(3):434--448, 1953.

\bibitem{MSS13a}
A.~Marcus, D.~A. Spielman, and N.~Srivastava.
\newblock {Interlacing families I: Bipartite Ramanujan graphs of all degrees}.
\newblock In {\em Foundations of Computer Science (FOCS), 2013 IEEE 54th Annual
  Symposium on}, pages 529--537. IEEE, 2013.

\bibitem{MSS13b}
A.~Marcus, D.~A. Spielman, and N.~Srivastava.
\newblock {Interlacing families. II: Mixed characteristic polynomials and the
  Kadison-Singer problem.}
\newblock {\em {Ann. Math. (2)}}, 182(1):327--350, 2015.

\bibitem{MM09}
M.~Mezard and A.~Montanari.
\newblock {\em Information, Physics, and Computation}.
\newblock Oxford University Press, Inc., New York, NY, USA, 2009.

\bibitem{Mori13}
R.~Mori.
\newblock {\em New Understanding of the Bethe Approximation and the Replica
  Method}.
\newblock PhD thesis, Kyoto University, 2013.

\bibitem{NS16}
A.~Nikolov and M.~Singh.
\newblock Maximizing determinants under partition constraints.
\newblock In {\em Proceedings of the 48th Annual ACM SIGACT Symposium on Theory
  of Computing}, pages 192--201, 2016.

\bibitem{Pearl88}
J.~Pearl.
\newblock {\em Probabilistic reasoning in intelligent systems - networks of
  plausible inference}.
\newblock Morgan Kaufmann series in representation and reasoning. Morgan
  Kaufmann, 1989.

\bibitem{Pemantle11}
R.~Pemantle.
\newblock {Hyperbolicity and stable polynomials in combinatorics and
  probability}.
\newblock {\em Current developments in mathematics}, pages 57--123, 2011.

\bibitem{Ruozzi12}
N.~Ruozzi.
\newblock {The Bethe Partition Function of Log-supermodular Graphical Models}.
\newblock In {\em Advances in Neural Information Processing Systems 25: 26th
  Annual Conference on Neural Information Processing Systems 2012. Proceedings
  of a meeting held December 3-6, 2012, Lake Tahoe, Nevada, United States.},
  pages 117--125, 2012.

\bibitem{Schrijver98}
A.~Schrijver.
\newblock Counting 1-factors in regular bipartite graphs.
\newblock {\em J. Comb. Theory, Ser. {B}}, 72(1):122--135, 1998.

\bibitem{SinghV14}
M.~Singh and N.~K. Vishnoi.
\newblock Entropy, optimization and counting.
\newblock In {\em Proceedings of the 46th Annual ACM Symposium on Theory of
  Computing}, pages 50--59. ACM, 2014.

\bibitem{SV17}
D.~Straszak and N.~K. Vishnoi.
\newblock Real stable polynomials and matroids: Optimization and counting.
\newblock In {\em Proceedings of the 49th Annual ACM SIGACT Symposium on Theory
  of Computing}, STOC 2017, pages 370--383, 2017.

\bibitem{SudderthWW07}
E.~B. Sudderth, M.~J. Wainwright, and A.~S. Willsky.
\newblock Loop series and bethe variational bounds in attractive graphical
  models.
\newblock In {\em Advances in Neural Information Processing Systems 20,
  Proceedings of the Twenty-First Annual Conference on Neural Information
  Processing Systems, Vancouver, British Columbia, Canada, December 3-6, 2007},
  pages 1425--1432, 2007.

\bibitem{Tanner81}
R.~M. Tanner.
\newblock A recursive approach to low complexity codes.
\newblock {\em {IEEE} Trans. Information Theory}, 27(5):533--547, 1981.

\bibitem{Valiant79}
L.~G. Valiant.
\newblock The complexity of computing the permanent.
\newblock {\em Theor. Comput. Sci.}, 8:189--201, 1979.

\bibitem{Vishnoi-Survey}
N.~K. Vishnoi.
\newblock Zeros of polynomials and their applications to theory: a primer.
\newblock In {\em FOCS 2013 Workshop on Zeros of Polynomials and their
  Applications to Theory}, 2013.

\bibitem{Vontobel13a}
P.~O. Vontobel.
\newblock Counting in graph covers: {A} combinatorial characterization of the
  bethe entropy function.
\newblock {\em {IEEE} Trans. Information Theory}, 59(9):6018--6048, 2013.

\bibitem{Vontobel13}
P.~O. Vontobel.
\newblock {The Bethe Permanent of a Nonnegative Matrix}.
\newblock {\em {IEEE} Trans. Information Theory}, 59(3):1866--1901, 2013.

\bibitem{Wagner11}
D.~Wagner.
\newblock Multivariate stable polynomials: theory and applications.
\newblock {\em Bulletin of the American Mathematical Society}, 48(1):53--84,
  2011.

\bibitem{WJ08}
M.~J. Wainwright and M.~I. Jordan.
\newblock Graphical models, exponential families, and variational inference.
\newblock {\em Found. Trends Mach. Learn.}, 1(1-2):1--305, Jan. 2008.

\bibitem{Watanabe11}
Y.~{Watanabe}.
\newblock {A conjecture on independent sets and graph covers}.
\newblock {\em ArXiv e-prints}, Sept. 2011.

\bibitem{WC10}
Y.~Watanabe and M.~Chertkov.
\newblock Belief propagation and loop calculus for the permanent of a
  non-negative matrix.
\newblock {\em Journal of Physics A: Mathematical and Theoretical}, 43(24),
  2010.

\bibitem{WTJS14}
A.~Weller, K.~Tang, T.~Jebara, and D.~Sontag.
\newblock Understanding the bethe approximation: When and how can it go wrong?
\newblock In {\em Proceedings of the Thirtieth Conference on Uncertainty in
  Artificial Intelligence, {UAI} 2014, Quebec City, Quebec, Canada, July 23-27,
  2014}, pages 868--877, 2014.

\bibitem{YFW05}
J.~S. Yedidia, W.~T. Freeman, and Y.~Weiss.
\newblock Constructing free-energy approximations and generalized belief
  propagation algorithms.
\newblock {\em {IEEE} Trans. Information Theory}, 51(7):2282--2312, 2005.

\end{thebibliography}

\appendix

\section{Example (Permanent)}\label{sec:permanent}
We discuss the problem of computing the permanent of a nonnegative real matrix $A\in \R^{n\times n}$. Recall that
$$\per(A) = \sum_{\sigma \in S_n} \prod_{i=1}^n A_{i,\sigma(i)}.$$
Consider the following factor graph representation of permanents~\cite{WC10,Vontobel13}. The graph $G$ has nodes $a_1, a_2, \ldots, a_n, b_1, b_2, \ldots, b_n$ and there is an edge $(a_i,b_j)$ for every $i,j\in \{1,2, \ldots, n\}$. The local functions are then specified as follows:

$$g_{a_i}({\sigma}_{i,1}, {\sigma}_{i,2}, \ldots, {\sigma}_{i,n})=\begin{cases}
A_{i,j}^{1/2} &\mbox{if } \sum_{k=1}^n {\sigma}_{i,k}=1 
\mbox{ and } {\sigma}_{i,j}=1,\\
0 & \mbox{otherwise.}
\end{cases}
$$
$$g_{b_j}({\sigma}_{1,j}, {\sigma}_{2,j}, \ldots, {\sigma}_{n,j})=\begin{cases}
A_{i,j}^{1/2} &\mbox{if } \sum_{k=1}^n {\sigma}_{k,j}=1  
\mbox{ and } {\sigma}_{i,j}=1,\\
0 & \mbox{otherwise.}
\end{cases}
$$

In other words, the local functions are putting constraints on the configurations saying that exactly one element per row is equal to $1$ and similarly for columns: every column contains a single $1$ and $(n-1)$ $0$s.
This implies that if $\sigma \in \{0,1\}^{n \times n}$ corresponds to a perfect matching $M$ in the complete graph $K_{n,n}$ then $$g(\sigma) = \prod_{i,j} A_{i,j}^{\sigma_{i,j}}$$ and in fact $\sum_{\sigma} g(\sigma) = \per(A)$. 
One can show (see~\cite{Vontobel13}) that the Bethe approximation for this factor graphs takes the form
$$Z_B(G) = \sup_{B\in \Omega_n} \exp \inparen{\sum_{i,j} B_{i,j} \log \frac{A_{i,j}}{B_{i,j}} - \sum_{i,j} H(B_{i,j})},$$
where $\Omega_n$ is the set of all {\it doubly-stochastic} $n\times n$ matrices.
The polynomials corresponding to local functions are linear, of the form
$$h_{a_i}(x) = \sum_{j=1}^n A_{i,j}x_{i,j}, ~~~~~~~~h_{b_j}(y) = \sum_{i=1}^n A_{i,j}y_{i,j}.$$
Note that such polynomials are real stable, as they are linear and all their coefficients are nonnegative. Such a polynomial, when evaluated at a point $z\in \C^{n \times n}$ with $\Im(z_{i,j})>0$ for every $i,j=1,2, \ldots, n$ gives a value which also has a positive imaginary part, and hence is not a zero. 
Therefore, we can apply Theorem~\ref{thm:rs} to yield an alternative proof of the lower bound $Z_B(G) \leq Z(G)$ established in~\cite{Gurvits11,GS14}.

\end{document}